\def\eqref#1{equation~\ref{#1}}
\def\1{\bm{1}}
\def\vc{{\bm{c}}}
\def\vx{{\bm{x}}}
\DeclareMathAlphabet{\mathsfit}{\encodingdefault}{\sfdefault}{m}{sl}
\SetMathAlphabet{\mathsfit}{bold}{\encodingdefault}{\sfdefault}{bx}{n}
\newcommand{\E}{\mathbb{E}}
\newcommand{\sigmoid}{\sigma}
\newcommand{\KL}{D_{\mathrm{KL}}}
\title{Toward Guidance-Free AR Visual Generation via Condition Contrastive Alignment}
\author{Huayu Chen$^{1}$, Hang Su$^{1}$, Peize Sun$^2$, Jun Zhu$^{1}$ \\
$^1$Tsinghua University  \quad $^2$The University of Hong Kong \\
\texttt{chenhuay17@gmail.com}
}
\theoremstyle{plain}
\newtheorem{theorem}{Theorem}[section]
\theoremstyle{definition}
\theoremstyle{remark}
\begin{document}

\maketitle

\begin{abstract}
Classifier-Free Guidance (CFG) is a critical technique for enhancing the sample quality of visual generative models. However, in autoregressive (AR) multi-modal generation, CFG introduces design inconsistencies between language and visual content, contradicting the design philosophy of unifying different modalities for visual AR. Motivated by language model alignment methods, we propose \textit{Condition Contrastive Alignment} (CCA) to facilitate guidance-free AR visual generation with high performance and analyze its theoretical connection with guided sampling methods. Unlike guidance methods that alter the sampling process to achieve the ideal sampling distribution, CCA directly fine-tunes pretrained models to fit the \textit{same} distribution target. Experimental results show that CCA can significantly enhance the guidance-free performance of all tested models with just one epoch of fine-tuning ($\sim$1\% of pretraining epochs) on the pretraining dataset, on par with guided sampling methods. This largely removes the need for guided sampling in AR visual generation and cuts the sampling cost by half. Moreover, by adjusting training parameters, CCA can achieve trade-offs between sample diversity and fidelity similar to CFG. This experimentally confirms the strong theoretical connection between language-targeted alignment and visual-targeted guidance methods, unifying two previously independent research fields. Code and model weights: \url{https://github.com/thu-ml/CCA}.

\end{abstract}
\begin{figure}[h]
\centering
\begin{minipage}{0.495\linewidth}
\centering
\includegraphics[width=\columnwidth]{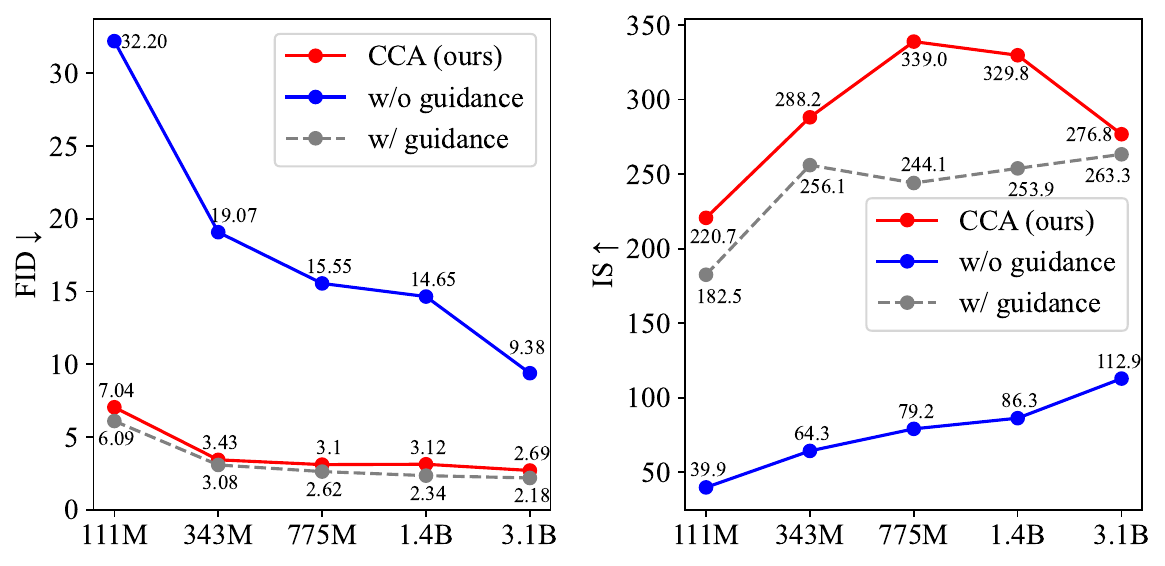}
{(a) LlamaGen}
\end{minipage}
\begin{minipage}{0.495\linewidth}
\centering
\includegraphics[width=\columnwidth]{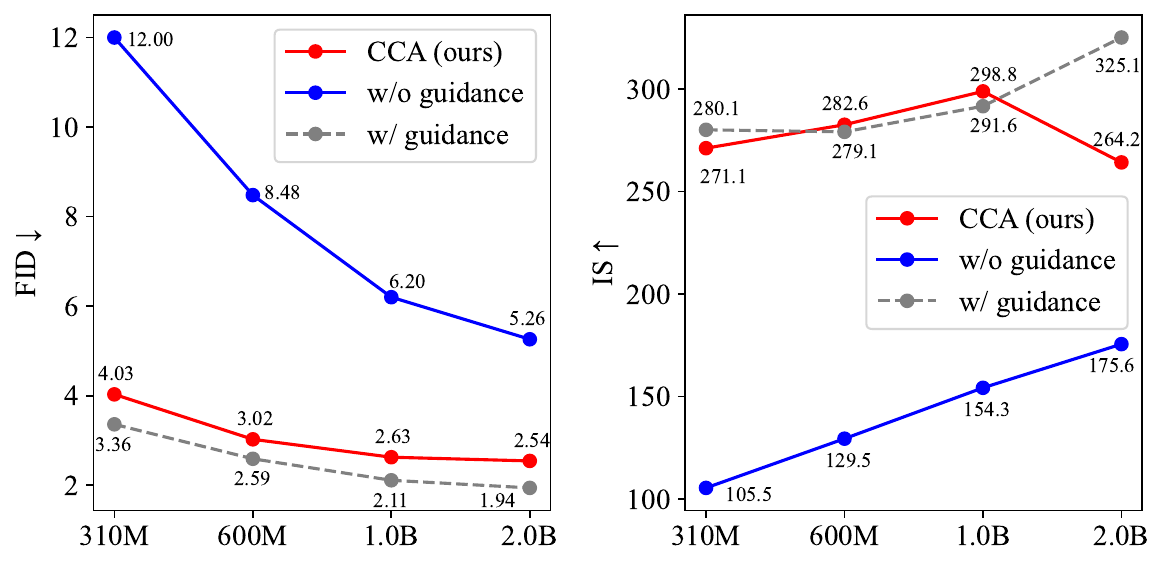}
{(b) VAR}
\end{minipage}
\caption{\label{fig:result_allsizes}CCA significantly improves guidance-free sample quality for AR visual generative models with just one epoch of fine-tuning on the pretraining dataset.}
\label{fig:enter-label}
\end{figure}

\section{Introduction}
Witnessing the scalability and generalizability of autoregressive (AR) models in language domains, recent works have been striving to replicate similar success for visual generation \citep{vqgan, rq}. By quantizing images into discrete tokens, AR visual models can process images using the same \textit{next-token prediction} approach as Large Language Models (LLMs). This approach is attractive because it provides a potentially unified framework for vision and language, promoting consistency in reasoning and generation across modalities \citep{chameleon, showo}.

Despite the design philosophy of maximally aligning visual modeling with language modeling methods, AR visual generation still differs from language generation in a notable aspect. 
AR visual generation relies heavily on Classifier-Free Guidance (CFG) \citep{cfg}, a sampling technique unnecessary for language generation, which has caused design inconsistencies between the two types of content. During sampling, while CFG helps improve sample quality by contrasting conditional and unconditional models, it requires two model inferences per visual token, which doubles the sampling cost. During training, CFG requires randomly masking text conditions to learn the unconditional distribution, preventing the simultaneous training of text tokens \citep{chameleon}.

In contrast to visual generation, LLMs rarely rely on guided sampling. Instead, the surge of LLMs' instruction-following abilities has largely benefited from fine-tuning-based alignment methods \citep{schulman2022chatgpt}. Motivated by this observation, we seek to study: \textit{``Can we avoid guided sampling in AR visual generation, but attain similar effects by directly fine-tuning pretrained models?''}

In this paper, we derive Condition Contrastive Alignment (CCA) for enhancing visual AR performance without guided sampling. Unlike CFG which necessitates altering the sampling process to achieve a more desirable sampling distribution, CCA directly fine-tunes pretrained AR models to fit the \textit{same} distribution target, leaving the sampling scheme untouched. CCA is quite convenient to use since it does not rely on any additional datasets beyond the pretraining data. Our method functions by contrasting positive and negative conditions for a given image, which can be easily created from the existing pretraining dataset as matched or mismatched image-condition pairs. CCA is also highly efficient given its fine-tuning nature. We observe that our method achieves ideal performance within just one training epoch, indicating negligible computational overhead ($\sim$1\% of pretraining).

In Sec. \ref{sec:connection}, we highlight a theoretical connection between CCA and guided sampling techniques \citep{adm, cfg}. Essentially these methods all target at the same sampling distribution. The distributional gap between this target distribution and pretrained models is related to a physical quantity termed conditional residual ($\log \frac{p(\vx|\vc)}{p(\vx)}$). Guidance methods typically train an additional model (e.g., unconditional model or classifier) to estimate this quantity and enhance pretrained models by altering their sampling process. Contrastively, CCA follows LLM alignment techniques \citep{dpo, nca} and parameterizes the conditional residual with the difference between our target model and the pretrained model, thereby directly training a sampling model. This analysis unifies language-targeted alignment and visual-targeted guidance methods, bridging the gap between the two previously independent research fields.

We apply CCA to two state-of-the-art autoregressive (AR) visual models, LLamaGen \citep{llamagen} and VAR \citep{var}, which feature distinctly different visual tokenization designs. Both quantitative and qualitative results show that CCA significantly and consistently enhances the guidance-free sampling quality across all tested models, achieving performance levels comparable to CFG (Figure \ref{fig:result_allsizes}). We further show that by varying training hyperparameters, CCA can realize a controllable trade-off between image diversity and fidelity similar to CFG. This further confirms their theoretical connections. We also compare our method with some existing LLM alignment methods \citep{welleck2019neural, dpo} to justify its algorithm design. Finally, we demonstrate that CCA can be combined with CFG to further improve performance.

Our contributions: 1. We take a big step toward guidance-free visual generation by significantly improving the visual quality of AR models. 2. We reveal a theoretical connection between alignment and guidance methods. This shows that language-targeted alignment can be similarly applied to visual generation and effectively replace guided sampling, closing the gap between these two fields.

\section{Background}
\subsection{Autoregressive (AR) Visual Models}
\paragraph{Autoregressive models.} Consider data $\vx$ represented by a sequence of discrete tokens $\vx_{1:N}:= \{x_1, x_2, ..., x_N\}$, where each token $\vx_n$ is an integer. Data probability $p(\vx)$ can be decomposed as:
\begin{equation}
\label{eq:px}
    p(\vx) = p(\vx_1) \prod_{n=2}^N p(\vx_n| \vx_{<n}).
\end{equation}
AR models thus aim to learn $p_\phi(\vx_n| \vx_{<n})\approx p(\vx_n| \vx_{<n})$, where each token $\vx_n$ is conditioned only on its previous input $\vx_{<n}$. This is known as \textit{next-token prediction} \citep{gpt1}.

\paragraph{Visual tokenization.} Image pixels are continuous values, making it necessary to use vector-quantized tokenizers for applying discrete AR models to visual data \citep{vqvae,vqgan}. These tokenizers are trained to encode images $\vx$ into discrete token sequences $\vx_{1:N}$ and decode them back by minimizing reconstruction losses. In our work, we utilize pretrained and frozen visual tokenizers, allowing AR models to process images similarly to text.

\subsection{Guided Sampling for Visual Generation}
\label{sec:guidance_bg}
Despite the core motivation of developing a unified model for language and vision, the AR sampling strategies for visual and text contents differ in one key aspect: AR visual generation necessitates a sampling technique named Classifier-Free Guidance (CFG) \citep{cfg}. During inference, CFG adjusts the sampling logits $\ell^\text{sample}$ for each token as: 
\begin{equation}
    \ell^\text{sample} = \ell^c + s(\ell^c - \ell^u),
\end{equation}
where $\ell^c$ and $\ell^u$ are the conditional and unconditional logits provided by two separate AR models, $p_\phi(\vx|\vc)$ and $p_\phi(\vx)$. The condition $\vc$ can be class labels or text captions, formalized as prompt tokens. The scalar $s$ is termed guidance scale. Since token logits represent the (unnormalized) log-likelihood in AR models, \citet{cfg} prove that the sampling distribution satisfies:
\begin{equation}
    \label{eq:p_sample_cfg}
    p^{\text{sample}}(\vx|\vc) \propto p_\phi(\vx|\vc) \left[\frac{p_\phi(\vx|\vc)}{p_\phi(\vx)} \right]^s.
\end{equation}
At $s=0$, the sampling model becomes exactly the pretrained conditional model $p_\phi$. However, previous works \citep{cfg, sdxl, muse, llamagen} have widely observed that an appropriate $s>0$ is critical for an ideal trade-off between visual fidelity and diversity, making training another unconditional model $p_\phi$ necessary. In practice, the unconditional model usually shares parameters with the conditional one, and can be trained concurrently by randomly dropping condition prompts $\vc$ during training.

Other guidance methods, such as Classifier Guidance \citep{cfg} and Energy Guidance \citep{cep} have similar effects of CFG. The target sampling distribution of these methods can all be unified under Eq. \ref{eq:p_sample_cfg}.

\subsection{Direct Preference Optimization for Language Model Alignment}

 Reinforcement Learning from Human Feedback (RLHF) is crucial for enhancing the instruction-following ability of pretrained Language Models (LMs) \citep{schulman2022chatgpt, achiam2023gpt}. Performing RL typically requires a reward model, which can be learned from human preference data. Formally, the Bradley-Terry preference model \citep{bradley1952rank} assumes. 
 \begin{equation}
    p(\vx_w \succ \vx_l|\vc) := \frac{e^{r(\vc, \vx_w)}}{e^{r(\vc, \vx_l)}+e^{r(\vc, \vx_w)}} = \sigma(r(\vc, \vx_w) - r(\vc, \vx_l)),
\end{equation}
where $\vx_w$ and $\vx_l$ are respectively the winning and losing response for an instruction $\vc$, evaluated by human. $r(\cdot)$ represents an implicit reward for each response. The target LM $\pi_\theta$ should satisfy $\pi_\theta(\vx|\vc) \propto \mu_\phi(\vx|\vc)e^{r(\vc, \vx) /\beta}$ to attain higher implicit reward compared with the pretrained LM $\mu_\phi$.

Direct Preference Optimization \citep{dpo} allows us to directly optimize pretrained LMs on preference data, by formalizing $r_\theta(\vc, \vx) := \beta \log \pi_\theta(\vx|\vc) - \beta \log \mu_\phi(\vx|\vc)$:
\begin{equation}
\label{Eq:loss_DPO}
    \mathcal{L}^\text{DPO}_\theta =- \E_{\{\vc, \vx_w \succ \vx_l\}} \log \sigma \left(\beta\log\frac{\pi_\theta(\vx_w|\vc)}{\mu_\phi(\vx_w|\vc)} - \beta\log\frac{\pi_\theta(\vx_l|\vc)}{\mu_\phi(\vx_l|\vc)} \right).
\end{equation}
DPO is more streamlined and thus often more favorable compared with traditional two-stage RLHF pipelines: first training reward models, then aligning LMs with reward models using RL.

\section{Condition Contrastive Alignment}
Autoregressive visual models are essentially learning a parameterized model $p_\phi(\vx|\vc)$ to approximate the standard conditional image distribution $p(\vx|\vc)$. Guidance algorithms shift the sampling policy $p^{\text{sample}}(\vx|\vc)$ away from $p(\vx|\vc)$ according to Sec. \ref{sec:guidance_bg}:
\begin{equation}
\label{eq:target}
    p^{\text{sample}}(\vx|\vc) \propto p(\vx|\vc) \left[\frac{p(\vx|\vc)}{p(\vx)}\right]^s.
\end{equation}
At guidance scale $s=0$, sampling from $p^{\text{sample}}(\vx|\vc)=p(\vx|\vc)\approx p_\phi(\vx|\vc)$ is most straightforward. However, it is widely observed that an appropriate $s>0$ usually leads to significantly enhanced sample quality. The cost is that we rely on an extra unconditional model $p_\phi(\vx) \approx p(\vx)$ for sampling. This doubles the sampling cost and causes an inconsistent training paradigm with language.

In this section, we derive a simple approach to directly model the same target distribution $p^{\text{sample}}$ by fine-tuning pretrained models. Specifically, our methods leverage a singular loss function for optimizing $p_\phi(\vx|\vc) \approx p(\vx|\vc)$ to become $p_\theta^\text{sample}(\vx|\vc) \approx p^{\text{sample}}(\vx|\vc)$. Despite having similar effects as guided sampling, our approach does not require altering the sampling process. We theoretically derive our method in Sec. \ref{sec:derivation} and discuss its practical implementation in Sec. \ref{sec:practical}.

\subsection{Algorithm Derivation}
\label{sec:derivation}
The core difficulty of directly learning $p^{\text{sample}}_\theta$ is that we cannot access datasets under the distribution of $p^{\text{sample}}$. However, we observe the distributional difference between $p^{\text{sample}}(\vx|\vc)$ and  $p(\vx|\vc)$ is related to a simple quantity that can be potentially learned from existing datasets. Specifically, by taking the logarithm of both sides in Eq. \ref{eq:target} and applying some algebra, we have\footnotemark:
\footnotetext{We ignore a normalizing constant in Eq. \ref{eq:residual_target} for brevity. A more detailed discussion is in Appendix \ref{sec:analy}.}
\begin{equation}
\label{eq:residual_target}
    \frac{1}{s} \log \frac{p^{\text{sample}}(\vx|\vc)}{p(\vx|\vc)}  =  \log \frac{p(\vx|\vc)}{p(\vx)},
\end{equation}
of which the right-hand side (i.e., $\log \frac{p(\vx|\vc)}{p(\vx)}$) corresponds to the log gap between the conditional probability and unconditional probability for an image $\vx$, which we term as \textit{conditional residual}. 

Our key insight here is that the conditional residual can be directly learned through contrastive learning approaches \citep{NCE}, as sated below:

\begin{theorem}[\label{theorem:NCE}Noise Contrastive Estimation, proof in Appendix \ref{sec:proofs}] Let $r_\theta$ be a parameterized model which takes in an image-condition pair $(\vx, \vc)$ and outputs a scalar value $r_\theta(\vx, \vc)$. Consider the loss function: 
\begin{equation}
\label{eq:NCE_loss}
    \mathcal{L}^{\text{NCE}}_\theta (\vx, \vc) = - \E_{p(\vx, \vc)} \log \sigmoid (r_\theta(\vx, \vc)) - \E_{p(\vx)p(\vc)} \log \sigmoid (-r_\theta(\vx, \vc)).
\end{equation}
Given unlimited model expressivity for $r_\theta$, the optimal solution for minimizing $\mathcal{L}^{\text{NCE}}_\theta$ satisfies
\begin{equation}
    r_{\theta}^*(\vx, \vc) =  \log \frac{p(\vx|\vc)}{p(\vx)}.
\end{equation}
\end{theorem}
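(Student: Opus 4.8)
The plan is to recognize this as a pointwise optimization problem. Because $r_\theta$ is assumed to be arbitrarily expressive, minimizing $\mathcal{L}^{\text{NCE}}_\theta$ over $r_\theta$ amounts to minimizing the integrand separately at each point $(\vx,\vc)$. First I would merge the two expectations into a single integral,
\begin{equation}
\mathcal{L}^{\text{NCE}}_\theta = - \int \Big[\, p(\vx,\vc)\,\log\sigmoid\big(r_\theta(\vx,\vc)\big) + p(\vx)\,p(\vc)\,\log\sigmoid\big(-r_\theta(\vx,\vc)\big)\,\Big]\, d\vx\, d\vc,
\end{equation}
then fix $(\vx,\vc)$ and set $a := p(\vx,\vc) \ge 0$, $b := p(\vx)\,p(\vc) \ge 0$, so that the task reduces to minimizing the scalar function $f(t) := -a\log\sigmoid(t) - b\log\sigmoid(-t)$ over $t\in\R$, with $t$ playing the role of $r_\theta(\vx,\vc)$. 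The same argument works verbatim with sums in place of integrals for discrete $\vx$.

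Next I would solve this one-dimensional problem directly. Using $\sigmoid(-t) = 1-\sigmoid(t)$ together with $\tfrac{d}{dt}\log\sigmoid(t) = 1-\sigmoid(t)$ and $\tfrac{d}{dt}\log\sigmoid(-t) = -\sigmoid(t)$, one obtains
\begin{equation}
f'(t) = -a\big(1-\sigmoid(t)\big) + b\,\sigmoid(t) = (a+b)\,\sigmoid(t) - a .
\end{equation}
Setting $f'(t)=0$ gives $\sigmoid(t^\star) = \tfrac{a}{a+b}$, equivalently $t^\star = \log\tfrac{a}{b}$, and substituting the definitions of $a$ and $b$ back in yields
\begin{equation}
r_\theta^\star(\vx,\vc) = \log\frac{p(\vx,\vc)}{p(\vx)\,p(\vc)} = \log\frac{p(\vx|\vc)}{p(\vx)},
\end{equation}
which is the claimed optimum. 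To confirm this critical point is the unique global minimizer I would check $f''(t) = (a+b)\,\sigmoid(t)\big(1-\sigmoid(t)\big) > 0$ whenever $a+b>0$, so $f$ is strictly convex; combined with $f(t)\to\infty$ as $t\to\pm\infty$, this pins down the minimizer at $t^\star$.

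The conceptual core is just this short convex computation; the only things requiring care are bookkeeping. One must justify exchanging the minimization with the integral, and this is exactly where the unlimited-expressivity hypothesis enters: we need $r_\theta$ to be able to realize the pointwise-optimal value $\log\frac{p(\vx|\vc)}{p(\vx)}$ as a function of $(\vx,\vc)$. One must also handle supports: on $\mathrm{supp}\,p(\vx,\vc)$ we have $a>0$, which forces $b = p(\vx)\,p(\vc) > 0$ as well (since $p(\vx,\vc)>0$ implies $p(\vx)>0$ and $p(\vc)>0$), so the computation applies and $\log\frac{p(\vx|\vc)}{p(\vx)}$ is finite and well defined there; off this support the characterization is either vacuous (both terms absent) or a limiting statement ($r_\theta^\star \to -\infty$, matching $\log 0$), and in any case does not affect the conclusion. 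I expect this support/measure-zero bookkeeping, rather than the optimization itself, to be the main — and rather minor — obstacle.
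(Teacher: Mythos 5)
Your proof is correct, and it solves the pointwise problem by a genuinely different route than the paper. Both proofs share the opening move of merging the two expectations into a single integral so that, by unlimited expressivity, the minimization becomes pointwise in $(\vx,\vc)$. Where you diverge is in how the scalar subproblem is solved. You set $f(t) = -a\log\sigma(t) - b\log\sigma(-t)$, compute $f'(t) = (a+b)\sigma(t) - a$, solve for the critical point, and verify strict convexity via $f''(t) = (a+b)\sigma(t)(1-\sigma(t)) > 0$ — a direct calculus argument. The paper instead recognizes the integrand as (up to a constant entropy term) a weighted cross-entropy $H(Q_{\vx,\vc}, P^\theta_{\vx,\vc})$ between two Bernoulli distributions, where $Q_{\vx,\vc}$ has success probability $a/(a+b)$ and $P^\theta_{\vx,\vc}$ has success probability $\sigma(r_\theta(\vx,\vc))$, then invokes $\KL(Q\Vert P) \ge 0$ with equality iff $Q=P$. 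The two arguments are equivalent in strength; yours is more elementary (pure calculus, no information-theoretic machinery), while the paper's makes the "NCE is secretly binary classification" structure explicit and avoids differentiation entirely. Your explicit discussion of support and the degenerate cases ($a=0$ or $b=0$) is a small point of extra care that the paper's proof passes over silently.
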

Now that we have a tractable way of learning $r_\theta(\vx, \vc) \approx \log \frac{p(\vx|\vc)}{p(\vx)}$, the target distribution $p^\text{sample}$ can be jointly defined by $r_\theta(\vx, \vc)$ and the pretrained model $p_\phi$. However, we would still lack an explicitly parameterized model $p^\text{sample}_\theta$ if $r_\theta(\vx, \vc)$ is another independent network. To address this problem, we draw inspiration from the widely studied alignment techniques in language models \citep{dpo} and parameterize $r_\theta(\vx, \vc)$ with our target model $p^\text{sample}_\theta(\vx| \vc)$ and $p_\phi(\vx| \vc)$ according to Eq. \ref{eq:residual_target}:
\begin{equation}
    \label{eq:reformulate}
    r_\theta (\vx, \vc) := \frac{1}{s} \log \frac{p_\theta^\text{sample}(\vx|\vc)}{p_\phi(\vx|\vc)}.
\end{equation}
Then, the loss function becomes
\begin{equation}
    \label{eq:CCA_loss}
    \mathcal{L}^{\text{CCA}}_\theta = - \E_{\textcolor{blue}{p(\vx, \vc)}} \log \sigmoid \Big[\frac{1}{s} \log \frac{p_\theta^\text{sample}(\vx|\vc)}{p_\phi(\vx|\vc)}\Big] - \E_{\textcolor{blue}{p(\vx)p(\vc)}} \log \sigmoid \Big[ -\frac{1}{s} \log \frac{p_\theta^\text{sample}(\vx|\vc)}{p_\phi(\vx|\vc)}\Big].
\end{equation}
During training, $p_\theta^\text{sample}$ is learnable while pretrained $p_\phi$ is frozen. $p_\theta^\text{sample}$ can be initialized from $p_\phi$.
\begin{figure}[t]
    \centering
    \includegraphics[width=\textwidth]{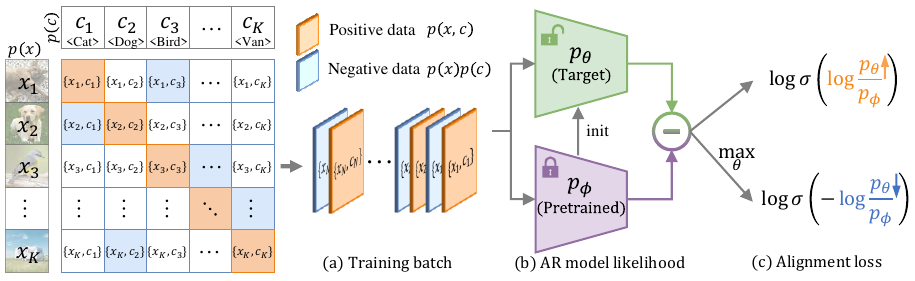}
    \caption{An overview of the CCA method.}
    \label{fig:overview}
\end{figure}

This way we can fit $p^\text{sample}$ with a single AR model $p_\theta^\text{sample}$, eliminating the need for training a separate unconditional model for guided sampling. Sampling strategies for $p_\theta^\text{sample}$ are consistent with standard language model decoding methods, which unifies decoding systems for multi-modal generation.

\subsection{Practical Algorithm}
\label{sec:practical}
Figure \ref{fig:overview} illustrates the CCA method. Specifically, implementing Eq. \ref{eq:CCA_loss} requires approximating two expectations: one under the joint distribution $\textcolor{blue}{p(\vx, \vc)}$ and the other under the product of its two marginals $\textcolor{blue}{p(\vx)p(\vc)}$. The key difference between these distributions is that in $p(\vx, \vc)$, images $\vx$ and conditions $\vc$ are correctly paired. In contrast, $\vx$ and $\vc$ are sampled independently from $p(\vx)p(\vc)$, meaning they are most likely mismatched.

In practice, we rely solely on the pretraining dataset to estimate $\mathcal{L}^{\text{CCA}}_\theta$. Consider a batch of $K$ data pairs $\{\vx, \vc\}_{1:K}$. We randomly shuffle the condition batch $\vc_{1:K}$ to become $\vc^\text{neg}_{1:K}$, where each $\vc^\text{neg}_k$ represents a negative condition of image $x_k$, while the original $\vc_k$ is a positive one. This results in our training batch $\{\vx, \vc, \vc^\text{neg}\}_{1:K}$. The loss function is
\begin{equation}
\label{eq:practical_loss}
    \mathcal{L}^{\text{CCA}}_\theta (\vx_k, \vc_k, \vc^\text{neg}_k) = - \underbrace{\log \sigmoid\Big[\beta \log \frac{p_\theta^\text{sample}(\vx_k|\vc_k)}{p_\phi(\vx_k|\vc_k)}\Big]}_{\text{relative likelihood for positive conditions}\;\uparrow} -\lambda \underbrace{\log \sigmoid \Big[-\beta\log \frac{p_\theta^\text{sample}(\vx_k|\vc^\text{neg}_k)}{p_\phi(\vx_k|\vc^\text{neg}_k)}\Big]}_{\text{relative likelihood for negative conditions}\;\downarrow},
\end{equation}
where $\beta$ and $\lambda$ are two hyperparameters that can be adjusted. $\beta$ replaces the guidance scale parameter $s$, while $\lambda$ is for controlling the loss weight assigned to negative conditions. The learnable $p_\theta^\text{sample}$ is initialized from the pretrained conditional model $p_\phi$, making $\mathcal{L}^{\text{CCA}}_\theta$ a fine-tuning loss.

We give an intuitive understanding of Eq. \ref{eq:practical_loss}. Note that $\log \sigmoid(\cdot)$ is monotonically increasing. The first term of Eq. \ref{eq:practical_loss} aims to increase the likelihood of an image $\vx$ given a positive condition, with a similar effect to maximum likelihood training. For mismatched image-condition data, the second term explicitly minimizes its relative model likelihood compared with the pretrained $p_\phi$.

We name the above training technique Condition Contrastive Alignment (CCA) due to its contrastive nature in comparing positive and negative conditions with respect to a single image. This naming also reflects its theoretical connection with Noise Contrastive Estimation (Theorem \ref{theorem:NCE}).


\section{Connection between CCA and guidance methods}
\label{sec:connection}
As summarized in Table \ref{tab:comparison}, the key distinction between CCA and guidance methods is how to model $\log \frac{p(\vx|\vc)}{p(\vx)}$, which defines the distributional gap between the target $p^\text{sample}(\vx|\vc)$ and $p(\vx|\vc)$ (Eq. \ref{eq:residual_target}).

In particular, Classifier Guidance \citep{adm} leverages Bayes' Rule and turn $\log \frac{p(\vx|\vc)}{p(\vx)}$ into a conditional posterior:
\begin{equation*}
    \log \frac{p(\vx|\vc)}{p(\vx)} = \log p(\vc|\vx) - \log p(\vc) \approx \log p_\theta(\vc|\vx) - \log p(\vc),
\end{equation*}
where $p(\vc|\vx)$ is explicitly modeled by a classifier  $p_\theta(\vc|\vx)$, which is trained by a standard classification loss. $p(\vc)$ is regarded as a uniform distribution.
CFG trains an extra unconditional model $p_\theta(\vx)$ to estimate the unknown part of $\log \frac{p(\vx|\vc)}{p(\vx)}$:
\begin{equation*}
    \log \frac{p(\vx|\vc)}{p(\vx)} \approx \log p_\phi(\vx|\vc) - \log p_\theta(\vx).
\end{equation*}
Despite their effectiveness, guidance methods all require learning a separate model and a modified sampling process compared with standard autoregressive decoding. 
In comparison, CCA leverages Eq. \ref{eq:residual_target} and models $\log \frac{p(\vx|\vc)}{p(\vx)}$ as 
\begin{equation*}
    \log \frac{p(\vx|\vc)}{p(\vx)} \approx \beta [\log p_\theta^\text{sample}(\vx|\vc) - \log p_\phi(\vx|c)],
\end{equation*}
which allows us to directly learn $p_\theta^\text{sample}$ instead of another guidance network.

Although CCA and conventional guidance techniques have distinct modeling methods, they all target at the same sampling distribution and thus have similar effects in visual generation. For instance, we show in Sec. \ref{sec:exp2} that CCA offers a similar trade-off between sample diversity and fidelity to CFG. 

\begin{table}[t]
\centering
\centerline{
\resizebox{1.0\textwidth}{!}{
\begin{tabular}{c|c|c|c}
\toprule
Method & \textbf{Classifier Guidance}  & \textbf{Classifier-Free Guidance} & \textbf{Condition Contrastive Alignment}\\
\addlinespace
\hline
\addlinespace
Modeling of $\log \frac{p(\vx|\vc)}{p(\vx)}$  & $\log \textcolor{blue}{p_\theta(\vc|\vx)} - \log p(\vc)$& $\log p_\phi(\vx|\vc) - \log \textcolor{blue}{p_\theta(\vx)}$ & $\beta [\log \textcolor{blue}{p_\theta^\text{sample}(\vx|\vc)} - \log p_\phi(\vx|c)]$\\
\addlinespace
Training loss  & $\max_{\theta} \E_{p(\vx, \vc)} \log p_\theta(\vc|\vx) $&$\max_{\theta} \E_{p(\vx)} \log p_\theta(\vx) $ & $\min_{\theta}\mathcal{L}^{\text{CCA}}_\theta$ in Eq. \ref{eq:CCA_loss}\\
\addlinespace
Sampling policy  & $\log p_\phi(\vx|\vc) + s \log p_\theta(\vc|\vx) $& $ (1+s)\log p_\phi(\vx|\vc) - s\log p_\theta(\vx) $ & $ \log p_\theta^\text{sample}(\vx|\vc) $\\
\addlinespace
\hline
\addlinespace
Extra training cost&$\sim$9\% of learning $p_\phi$&$\sim$10\% of learning $p_\phi$&$\sim$1\% of pretraining $p_\phi$\\
\addlinespace
Sampling cost&$\times\sim$1.3&$\times$2&$\times$1\\
\addlinespace
Applicable area&Diffusion &Diffusion \& Autoregressive & Autoregressive\\
\bottomrule
\end{tabular}
}}
\caption{Comparison of CCA (ours) and guidance methods in visual generative models. Computational costs are estimated according to \citet{adm} and \citet{cfg}.}
\label{tab:comparison}
\end{table}

\section{Experiments}

We seek to answer the following questions through our experiments:
\begin{enumerate}
\item How effective is CCA in enhancing the guidance-free generation quality of pretrained AR visual models, quantitatively and qualitatively? (Sec. \ref{sec:exp1})
\item Does CCA allow controllable trade-offs between sample diversity (FID) and fidelity (IS) similar to CFG? (Sec. \ref{sec:exp2})
\item How does CCA perform in comparison to alignment algorithms for LLMs? (Sec. \ref{sec:exp3})
\item Can CCA be combined with CFG to further improve performance? (Sec. \ref{sec:exp4})
\end{enumerate}
\subsection{Toward Guidance-Free AR Visual Generation}
\label{sec:exp1}
\begin{table}[t]
\centering
\begin{tabular}{l|l|cccc|cc}
\toprule
\multirow{2}{*}{} & \multirow{2.5}{*}{Model} & \multicolumn{4}{c|}{w/o Guidance} & \multicolumn{2}{c}{w/ Guidance} \\
\cmidrule(lr){3-8}
 &  & FID$\downarrow$ & IS$\uparrow$ & Precision$\uparrow$ & Recall$\uparrow$ & FID$\downarrow$ & IS$\uparrow$ \\
\midrule
\multirow{5}{*}{\rotatebox[origin=c]{90}{Diffusion}}
& ADM~~\citep{adm}& 7.49  & 127.5 & 0.72 & 0.63 & 3.94 & 215.8 \\
& LDM-4~~\citep{ldm}& 10.56 & 103.5 & 0.71 & 0.62 & 3.60 & 247.7 \\
& U-ViT-H/2~~\citep{uvit}& --    & --    & --   & --   & 2.29 & 263.9 \\
& DiT-XL/2~~\citep{dit}& 9.62  & 121.5 & 0.67 & \textbf{0.67} & 2.27 & 278.2 \\
& MDTv2-XL/2~~\citep{gao2023masked} & 5.06  & 155.6 & 0.72 & 0.66 & 1.58 & 314.7 \\
\midrule
\multirow{3}{*}{\rotatebox[origin=c]{90}{Mask}} 
& MaskGIT~~\citep{maskgit}& 6.18  & 182.1 & \underline{0.80} & 0.51 & --   & --    \\
& MAGVIT-v2~~\citep{magvit2}& 3.65  & 200.5 & --   & --   & 1.78 & 319.4 \\
& MAGE~~\citep{mage}& 6.93  & 195.8 & --   & --   & --   & --    \\
\midrule
\multirow{7}{*}{\rotatebox[origin=c]{90}{Autoregressive}} 
& VQGAN~~\citep{vqgan}& 15.78 & 74.3  & --   & --   & 5.20 & 280.3 \\
& ViT-VQGAN~~\citep{vit-vqgan}& 4.17  & 175.1 & --   & --   & 3.04 & 227.4 \\
& RQ-Transformer~~\citep{rq} & 7.55  & 134.0 & --   & --   & 3.80 & 323.7 \\
& LlamaGen-3B~~\citep{llamagen} & 9.38  & 112.9 & 0.69 & \textbf{0.67} & 2.18 & 263.3 \\
& $\quad+$\textbf{CCA (Ours)}   & \underline{2.69}  & \textbf{276.8} & \underline{0.80} & 0.59 & --   & --    \\
& VAR-d30~~\citep{var}        & 5.25  & 175.6 & 0.75 & 0.62 & 1.92 & 323.1 \\
& $\quad+$\textbf{CCA (Ours)}    & \textbf{2.54}  &  \underline{264.2} & \textbf{0.83} & 0.56 & --   & --    \\
\bottomrule
\end{tabular}
\caption{Model comparisons on class-conditional ImageNet $256\times256$ benchmark.}
\label{tab:model_comparison}
\end{table}
\begin{figure}[t]
\centering
\begin{minipage}{0.32\linewidth}
\centering
\includegraphics[width=0.95\columnwidth]{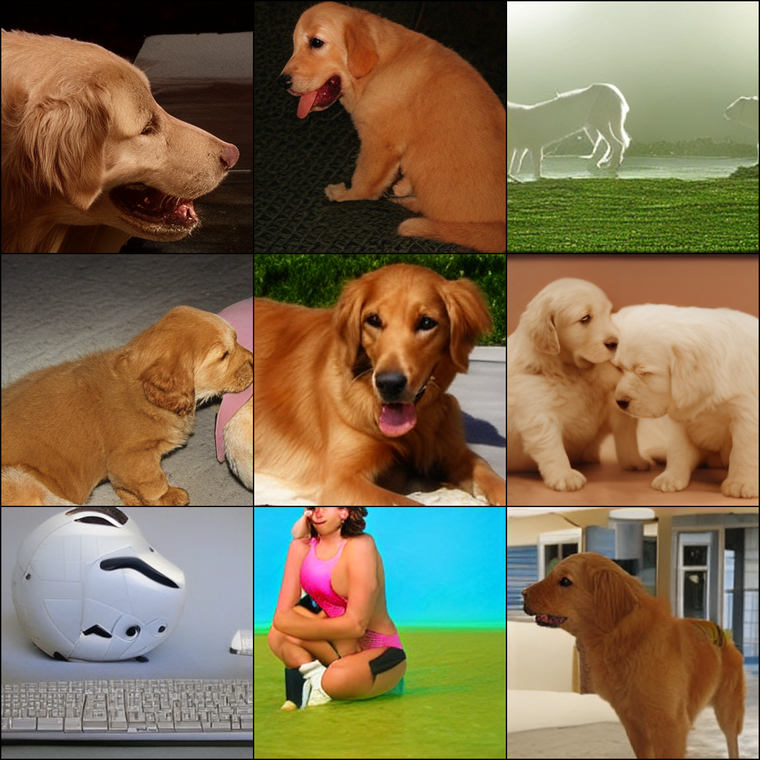}
\end{minipage}
\begin{minipage}{0.32\linewidth}
\centering
\includegraphics[width=0.95\columnwidth]{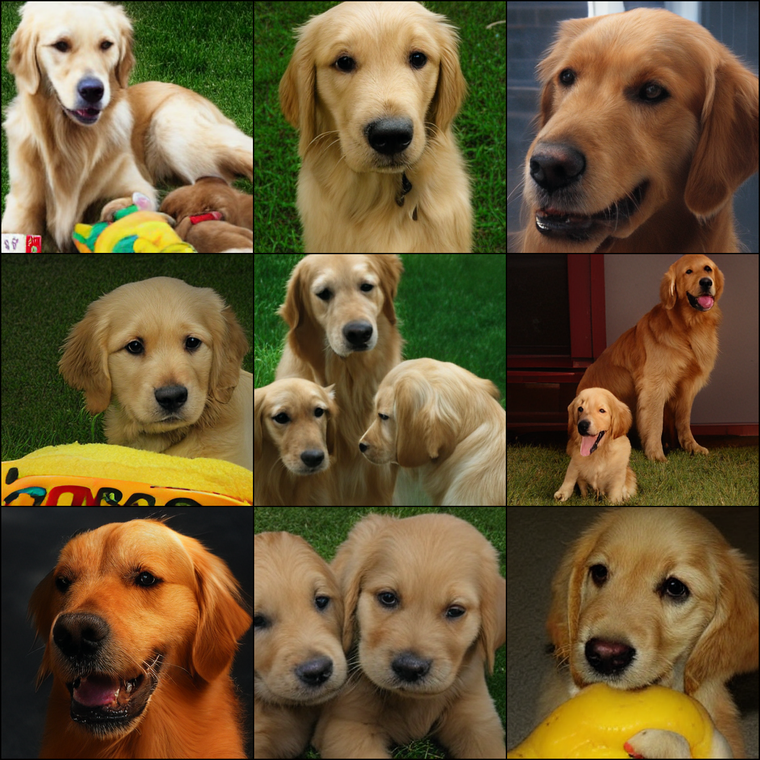}
\end{minipage}
\begin{minipage}{0.32\linewidth}
\centering
\includegraphics[width=0.95\columnwidth]{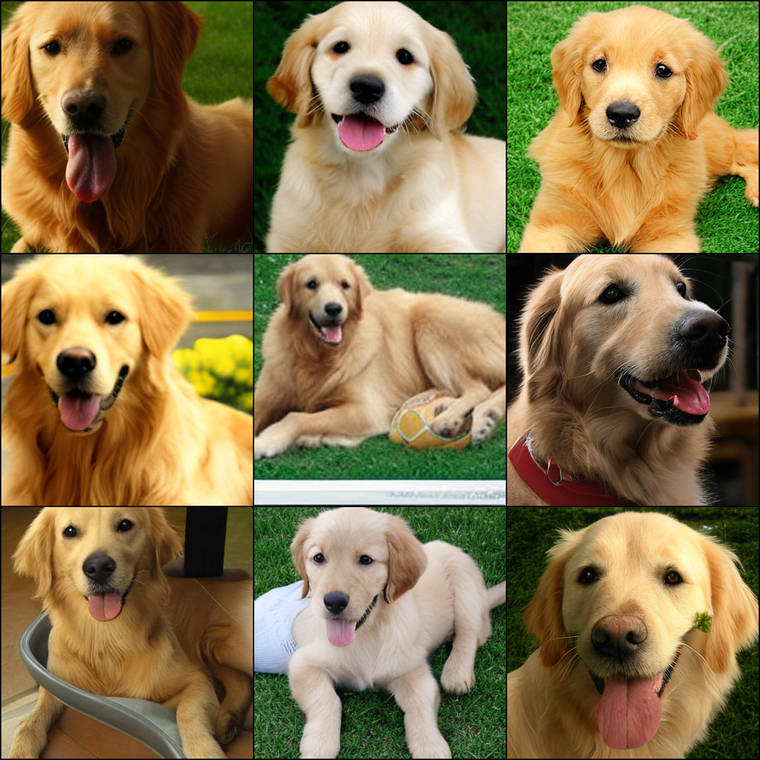}
\end{minipage}\\
\vspace{1mm}
\begin{minipage}{ 0.32\linewidth}
\centering
LlamaGen (w/o Guidance)
\end{minipage}
\begin{minipage}{0.32\linewidth}
\centering
\textbf{LlamaGen + CCA (w/o  G.)}
\end{minipage}
\begin{minipage}{0.32\linewidth}
\centering
LlamaGen (w/ CFG)
\end{minipage}
\begin{minipage}{ 0.32\linewidth}
\centering
\textit{IS=64.7}
\end{minipage}
\begin{minipage}{0.32\linewidth}
\centering
\textbf{\textit{IS=384.6}}
\end{minipage}
\begin{minipage}{0.32\linewidth}
\centering
\textit{IS=404.0}
\end{minipage}
\vspace{1mm}

\begin{minipage}{0.32\linewidth}
\centering
\includegraphics[width=0.95\columnwidth]{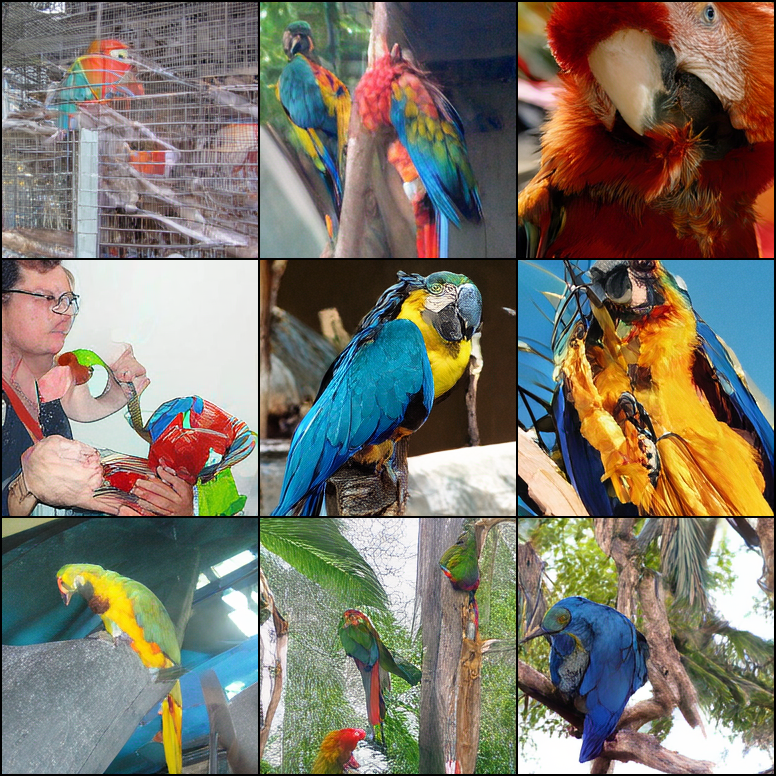}
\end{minipage}
\begin{minipage}{0.32\linewidth}
\centering
\includegraphics[width=0.95\columnwidth]{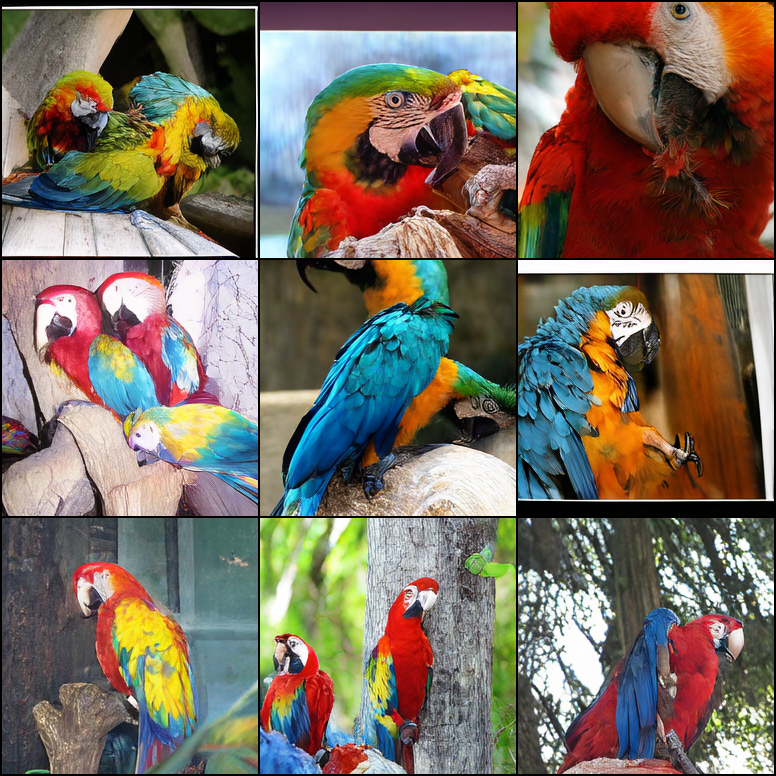}
\end{minipage}
\begin{minipage}{0.32\linewidth}
\centering
\includegraphics[width=0.95\columnwidth]{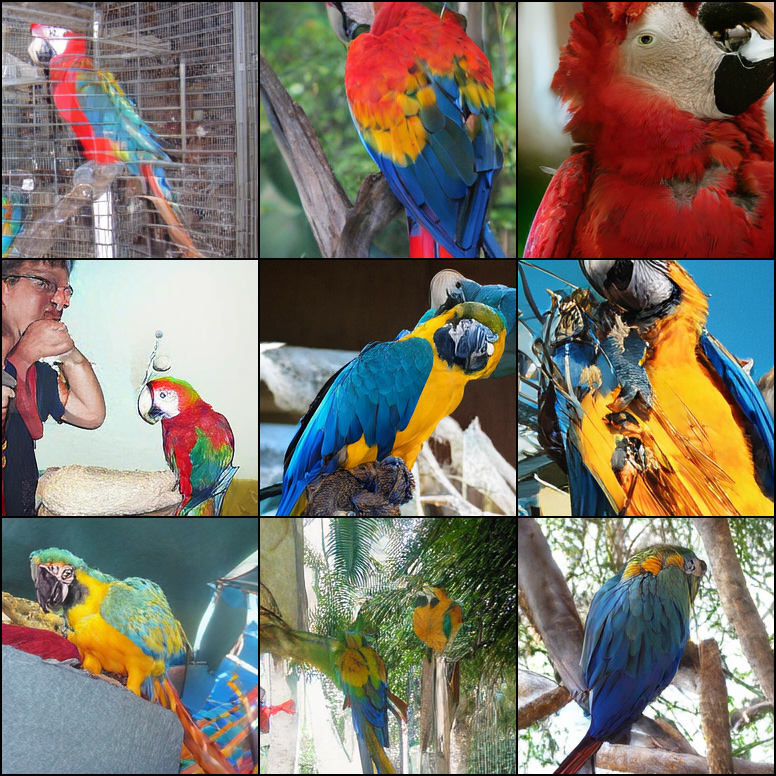}
\end{minipage}\\
\vspace{1mm}
\begin{minipage}{ 0.32\linewidth}
\centering
VAR (w/o Guidance)
\end{minipage}
\begin{minipage}{0.32\linewidth}
\centering
\textbf{VAR + CCA (w/o G.)}
\end{minipage}
\begin{minipage}{0.32\linewidth}
\centering
VAR (w/ CFGv2)
\end{minipage}
\begin{minipage}{ 0.32\linewidth}
\centering
\textit{IS=154.3}
\end{minipage}
\begin{minipage}{0.32\linewidth}
\centering
\textbf{\textit{IS=350.4}}
\end{minipage}
\begin{minipage}{0.32\linewidth}
\centering
\textit{IS=390.8}
\end{minipage}
\caption{\label{fig:qualititive} CCA and CFG can similarly enhance the sample fidelity of AR visual models. The base models are LlamaGen-L (343M) and VAR-d24 (1.0B). We use $s=3.0$ for CFG, and $\beta=0.02, \lambda=10^4$ for CCA. Figure \ref{fig:picllamagen} and Figure \ref{fig:picvar} contain more examples.}
\vspace{-5mm}
\end{figure}
\paragraph{Base model.} We experiment on two families of publicly accessible AR visual models, LlamaGen \citep{llamagen} and VAR \citep{var}. Though both are class-conditioned models pretrained on ImageNet, LlamaGen and VAR feature distinctively different tokenizer and architecture designs. LlamaGen focuses on reducing the inductive biases on visual signals. It tokenizes images in the classic raster order and adopts almost the same LLM architecture as Llama \citep{llama1}. VAR, on the other hand, leverages the hierarchical structure of images and tokenizes them in a multi-scale, coarse-to-fine manner. VAR adopts a GPT-2 architecture but tailors the attention mechanism specifically for visual content. For both works, CFG is a default and critical technique.
\paragraph{Training setup.} We leverage CCA to finetune multiple LlamaGen and VAR models of various sizes on the standard ImageNet dataset. The training scheme and hyperparameters are mostly consistent with the pretraining phase. We report performance numbers after only one training epoch and find this to be sufficient for ideal performance. We fix $\beta=0.02$ in Eq. \ref{eq:practical_loss} and select suitable $\lambda$ for each model. Image resolutions are $384\times384$ for LlamaGen and $256\times256$ for VAR. Following the original work, we resize LlamaGen samples to $256\times256$ whenever required for evaluation.

\paragraph{Experimental results.} We find CCA significantly improves the guidance-free performance of all tested models (Figure \ref{fig:result_allsizes}), evaluated by metrics like FID \citep{fid} and IS \citep{inception_score}. For instance, after one epoch of CCA fine-tuning, the FID score of LlamaGen-L (343M) improves from 19.07 to 3.41, and the IS score from 64.3 to 288.2, achieving performance levels comparable to CFG. This result is compelling, considering that CCA has negligible fine-tunning overhead compared with model pretraining and only half of sampling costs compared with CFG.

Figure \ref{fig:qualititive} presents a qualitative comparison of image samples before and after CCA fine-tuning. The results clearly demonstrate that CCA can vastly improve image fidelity, as well as class-image alignment of guidance-free samples.

Table \ref{tab:model_comparison} compares our best-performing models with several state-of-the-art visual generative models. With the help of CCA, we achieve a record-breaking FID score of 2.54 and an IS score of 276.8 for guidance-free samples of AR visual models. Although these numbers still somewhat lag behind CFG-enhanced performance, they demonstrate the significant potential of alignment algorithms to enhance visual generation and indicate the future possibility of replacing guided sampling.

\subsection{Controllable Trade-Offs between Diversity and Fidelity}
\label{sec:exp2}
A distinctive feature of CFG is its ability to balance diversity and fidelity by adjusting the guidance scale. It is reasonable to expect that CCA can achieve a similar trade-off since it essentially targets the same sampling distribution as CFG.

Figure \ref{fig:tradeoff} confirms this expectation: by adjusting the $\lambda$ parameter for fine-tuning, CCA can achieve similar FID-IS trade-offs to CFG. The key difference is that CCA enhances guidance-free models through training, while CFG mainly improves the sampling process.

\begin{figure}[t]
\centering
\begin{minipage}{0.45\linewidth}
\centering
\includegraphics[width=0.95\columnwidth]{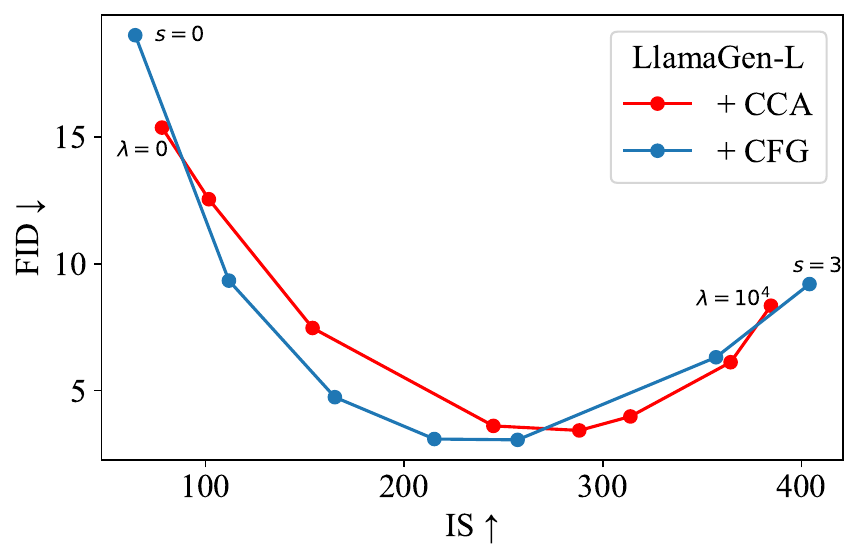}
\end{minipage}
\begin{minipage}{0.45\linewidth}
\centering
\includegraphics[width=0.95\columnwidth]{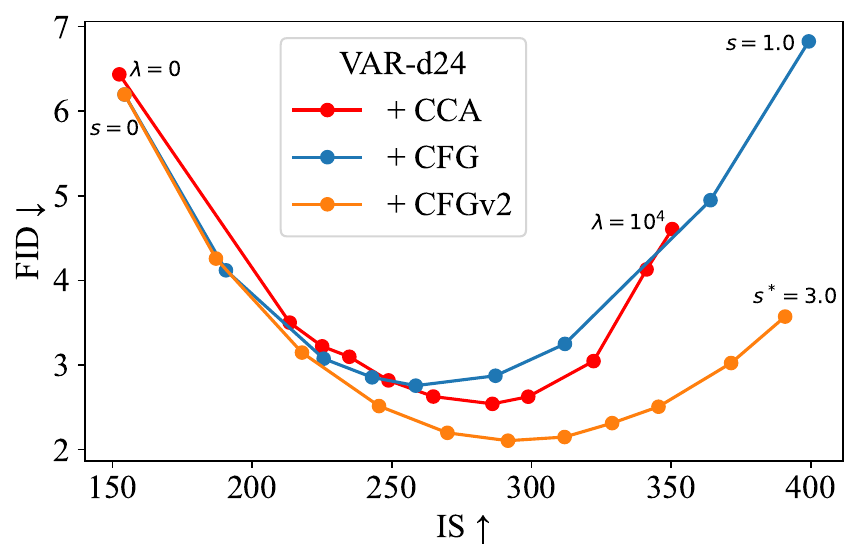}
\end{minipage}
\caption{CCA can achieve similar FID-IS trade-offs to CFG by adjusting training parameter $\lambda$.}
\label{fig:tradeoff}
\end{figure}

It is worth noting that VAR employs a slightly different guidance technique from standard CFG, which we refer to as CFGv2. CFGv2 involves linearly increasing the guidance scale $s$ during sampling, which was first proposed by \citet{muse} and found beneficial for certain models. The FID-IS curve of CCA more closely resembles that of standard CFG. Additionally, the hyperparameter $\beta$ also affects CCA performance. Although our algorithm derivation shows that $\beta$ is directly related to the CFG scale $s$, we empirically find adjusting $\beta$ is less effective and less predictable compared with adjusting $\lambda$. In practice, we typically fix $\beta$ and adjust $\lambda$. We ablate $\beta$ in Appendix \ref{sec:beta}.
\subsection{Can LLM Alignment Algorithms also Enhance Visual AR?}
\label{sec:exp3}
\begin{table}
    \centering
        \begin{minipage}[t]{0.495\textwidth}
        \centering
        \resizebox{1.0\textwidth}{!}{
            \begin{tabular}{lccccc}
                \hline
                \textbf{Model} & \textbf{FID}$\downarrow$ & \textbf{IS} & \textbf{sFID}$\downarrow$ & \textbf{Precision} & \textbf{Recall} 
                \\
                \hline
                LlamaGen-L & 19.00 & 64.7 & 8.78 & 0.61 & \bf{0.67} \\
                $\quad+ $DPO & 61.69 & 30.8 & 44.98 & 0.36 & 0.40 \\
                $\quad+ $Unlearn & 12.22 & 111.6 & 7.99 & 0.66 & 0.64 \\ 
                $\quad+ $\bf{CCA} & \bf{3.43} & \bf{288.2} & \bf{7.44} & \bf{0.81} & 0.52 \\
                \hline
            \end{tabular}}
        \end{minipage}
        \hfill
        \begin{minipage}[t]{0.494\textwidth}
            \centering
            \resizebox{1.0\textwidth}{!}{
            \begin{tabular}{l|ccccc}
                \hline
                \textbf{Model} & \textbf{FID}$\downarrow$ & \textbf{IS} & \textbf{sFID}$\downarrow$ & \textbf{Precision} & \textbf{Recall} \\
                \hline
                VAR-d24 & 6.20 & 154.3 & 8.50 & 0.74 & \bf{0.62} \\
                $\quad+ $DPO & 7.53 & 232.6 & 19.10 & \bf{0.85} & 0.34 \\
                $\quad+ $Unlearn & 5.55 & 165.9 & 8.41 & 0.75 & 0.61 \\
                $\quad+ $\bf{CCA} & \bf{2.63} & \bf{298.8} & \bf{7.63} & 0.84 & 0.55 \\
                \hline
            \end{tabular}}
        \end{minipage}
    
    \caption{Comparision of CCA and LLM alignment algorithms in visual generation.}
    \label{table:llm_alignment}
\end{table}
Intuitively, existing preference-based LLM alignment algorithms such as DPO and Unlearning \citep{welleck2019neural} should also offer improvement for AR visual models given their similarity to CCA. However, Table \ref{table:llm_alignment} shows that naive applications of these methods fail significantly.

\paragraph{DPO.} As is described in Eq. \ref{Eq:loss_DPO}, one can treat negative image-condition pairs as dispreferred data and positive ones as preferred data to apply the DPO loss. We ablate $\beta_d \in \{0.01, 0.1, 1.0, 10.0\}$ and report the best performance in Table \ref{table:llm_alignment}. Results indicate that DPO fails to enhance pretrained models, even causing performance collapse for LlamaGen-L. By inspecting training logs, we find that the likelihood of the positive data continuously decreases during fine-tuning, which may explain the collapse. This phenomenon is a well-observed problem of DPO \citep{nca, pal2024smaug}, stemming from its focus on optimizing only the \textit{relative} likelihood between preferred and dispreferred data, rather than controlling likelihood for positive and negative image-condition pairs separately. We refer interested readers to \citet{nca} for a detailed discussion.

\paragraph{Unlearning.} Also known as unlikelihood training, this method maximizes $\log p_\theta(\vx|\vc)$ through standard maximum likelihood training on positive data, while minimizing $\log p_\theta(\vx|\vc^\text{neg})$ to \textit{unlearn} negative data. A training parameter $\lambda_u$ controls the weight of the unlearning loss. We find that with small $0.01 \leq \lambda_u \leq 0.1$, Unlearning provides some benefit, but it is far less effective than CCA. This suggests the necessity of including a frozen reference model.

\subsection{Integration of CCA and CFG}
\label{sec:exp4}
\begin{figure}
\centering
\begin{minipage}{0.49\linewidth}
\centering
\includegraphics[width=\columnwidth]{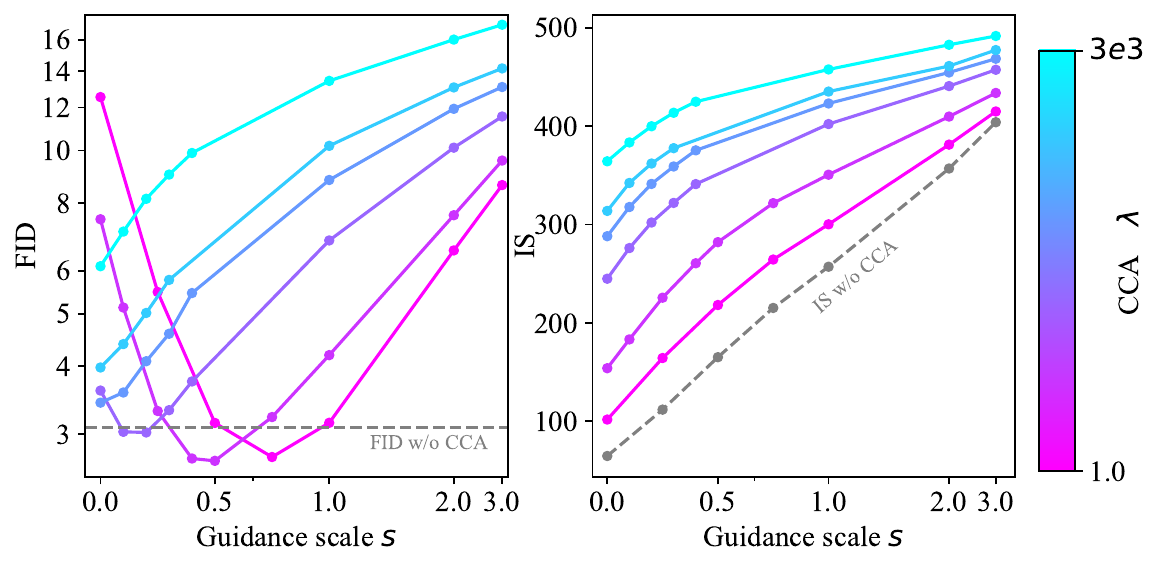}
\end{minipage}
\begin{minipage}{0.49\linewidth}
\centering
\includegraphics[width=\columnwidth]{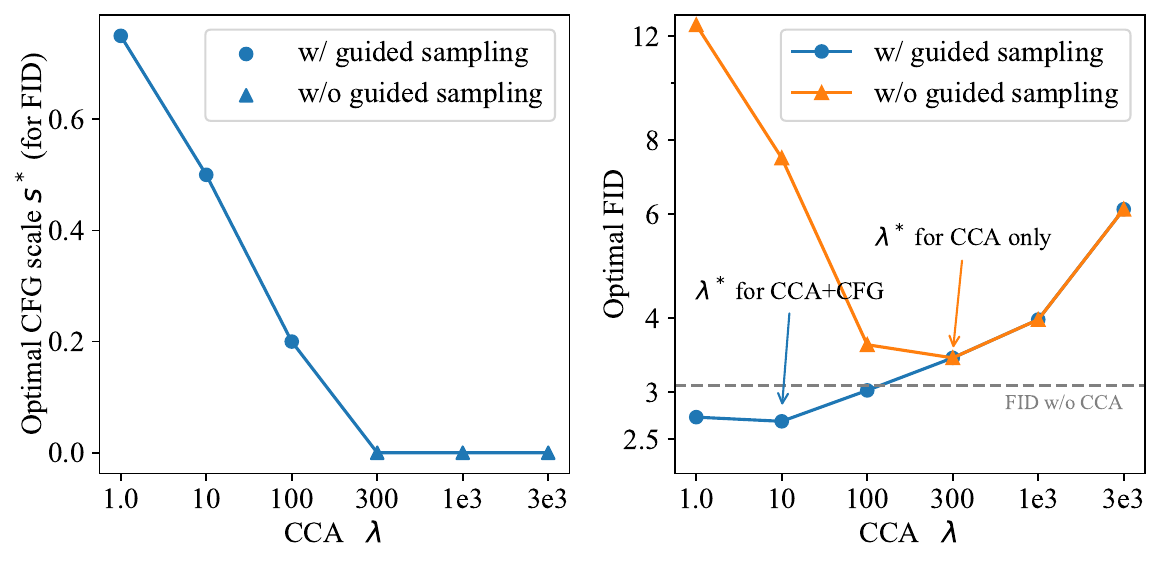}
\end{minipage}
\caption{The impact of training parameter $\lambda$ on the performance of CCA+CFG.}
\label{fig:lambda_ablation}
\end{figure}
\begin{figure}[t]
    \centering
    \includegraphics[width=0.8\linewidth]{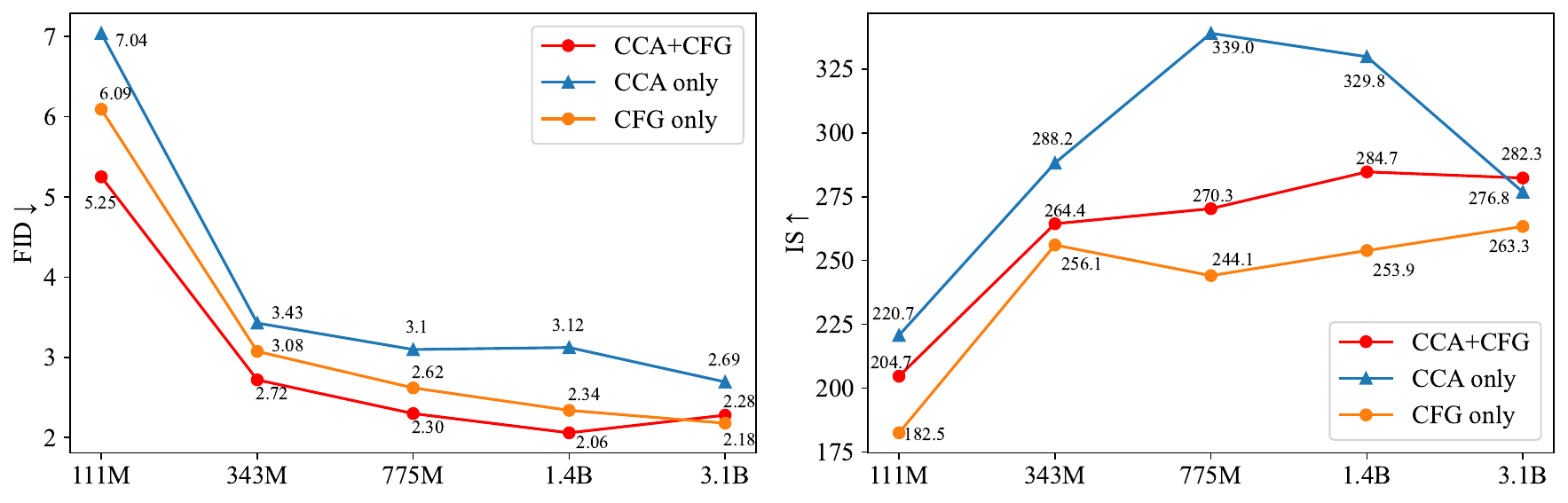}
    \caption{Integration of CCA+CFG yields improved performance over CFG alone.}
    \label{fig:CCA+CFG}
\end{figure}
If extra sampling costs and design inconsistencies of CFG are not concerns, could CCA still be helpful? A takeaway conclusion is yes: CCA+CFG generally outperforms CFG (Figure \ref{fig:CCA+CFG}), but it requires distinct hyperparameter choices compared with CCA-only training.

\paragraph{Implementation.} After pretraining the unconditional AR visual model by randomly dropping conditions, CFG requires us to also fine-tune the unconditional model during alignment. To achieve this, we follow previous approaches by also randomly replacing data conditions with [MASK] tokens at a probability of 10\%. These unconditional samples are treated as positive data during CCA training.

\paragraph{Comparison of CCA-only and CCA+CFG.}
They require different hyperparameters. As shown in Figure \ref{fig:lambda_ablation}, a larger $\lambda$ is typically needed for optimal FID scores in guidance-free generation. For models optimized for guidance-free sampling, adding CFG guidance does not further reduce the FID score. However, with a smaller $\lambda$, CCA+CFG could outperform the CFG method.

\section{Related Works}
\paragraph{Visual generative models.} Generative adversarial networks (GANs) \citep{gan,biggan,stylegan,gigagan} and diffusion models \citep{ddpm,scorebased,ddim,adm,vdm++} are representative modeling methods for visual content generation, widely recognized for their ability to produce realistic and artistic images \citep{stylegan-xl,cdm,dalle2, sdxl}. However, because these methods are designed for continuous data like images, they struggle to effectively model discrete data such as text, limiting the development of unified multimodal models for both vision and language. Recent approaches seek to address this by integrating diffusion models with language models \citep{chameleon, mar, transfusion}. Another line of works ~\citep{maskgit,muse,magvit2,showo,dalle1,parti} explores discretizing images ~\citep{vqvae, vqgan} and directly applying language models such as BERT-style \citep{bert} masked-prediction models and GPT-style \citep{gpt1} autoregressive models for image generation. 

\paragraph{Language model alignment.} Different from visual generative models which generally enhance sample quality through sampling-based methods \citep{adm,cfg,zhao2022egsde,cep}, LLMs primarily employ training-based alignment techniques to improve instruction-following abilities \citep{llama2, achiam2023gpt}. Reinforcement Learning (RL) is well-suited for aligning LLMs with human feedback \citep{schulman2017proximal,ouyang2022training}. However, this method requires learning a reward model before optimizing LLMs, leading to an indirect two-stage optimization process. Recent developments in alignment techniques \citep{dpo, cai2023ulma, IPO, kto, nca, ji2024towards} have streamlined this process. They enable direct alignment of LMs through a singular loss. Among all LLM alignment algorithms, our method is perhaps most similar to NCA \citep{nca}. Both NCA and CCA are theoretically grounded in the NCE framework \citep{NCE}. Their differences are mainly empirical regarding loss implementations, particularly in how to estimate expectations under the product of two marginal distributions.

\paragraph{Visual alignment.} Motivated by the success of alignment techniques in LLMs, several studies have also investigated aligning visual generative models with human preferences using RL \citep{black2023training,xu2024imagereward} or DPO \citep{ddpo, diffusionDPO}. For diffusion models, the application is not straightforward and must rely on some theoretical approximations, as diffusion models do not allow direct likelihood calculation, which is required by most LLM alignment algorithms \citep{eda}. Moreover, previous attempts at visual alignment have primarily focused on enhancing the aesthetic quality of generated images and necessitate a different dataset from the pretrained one. Our work distinguishes itself from prior research by having a fundamentally different optimization objective (replacing CFG) and does not rely on any additional data input.

\section{Conclusion}
In this paper, we propose Condition Contrastive Alignment (CCA) as a fine-tuning algorithm for AR visual generation models. CCA can significantly enhance the guidance-free sample quality of pretrained models without any modification of the sampling process. This paves the way for further development in multimodal generative models and cuts the cost of AR visual generation by half in comparison to CFG. Our research also highlights the strong theoretical connection between language-targeted alignment and visual-targeted guidance methods, facilitating future research of unifying visual modeling and language modeling. 


\subsubsection*{Acknowledgments}
We thank Fan Bao, Kai Jiang, Xiang Li, and Min Zhao for providing valuable suggestions. We thank Keyu Tian and Kaiwen Zheng for the discussion. 
\newpage

\bibliography{iclr2025_conference}
\bibliographystyle{iclr2025_conference}
\newpage
\appendix
\clearpage
\newpage
\begin{figure}[h]
\centering
\begin{minipage}{0.32\linewidth}
\centering
\includegraphics[width=0.95\columnwidth]{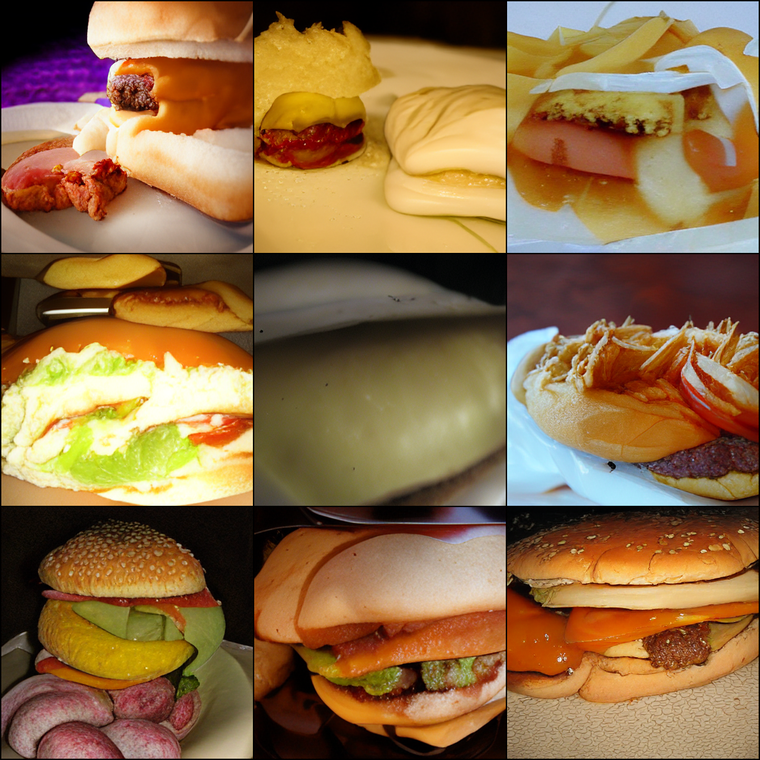}
\end{minipage}
\begin{minipage}{0.32\linewidth}
\centering
\includegraphics[width=0.95\columnwidth]{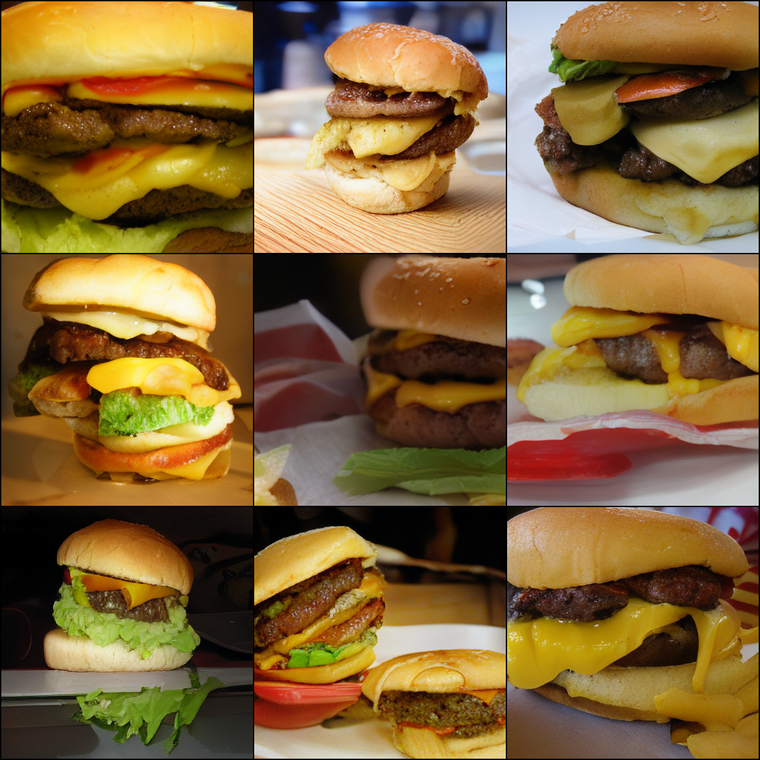}
\end{minipage}
\begin{minipage}{0.32\linewidth}
\centering
\includegraphics[width=0.95\columnwidth]{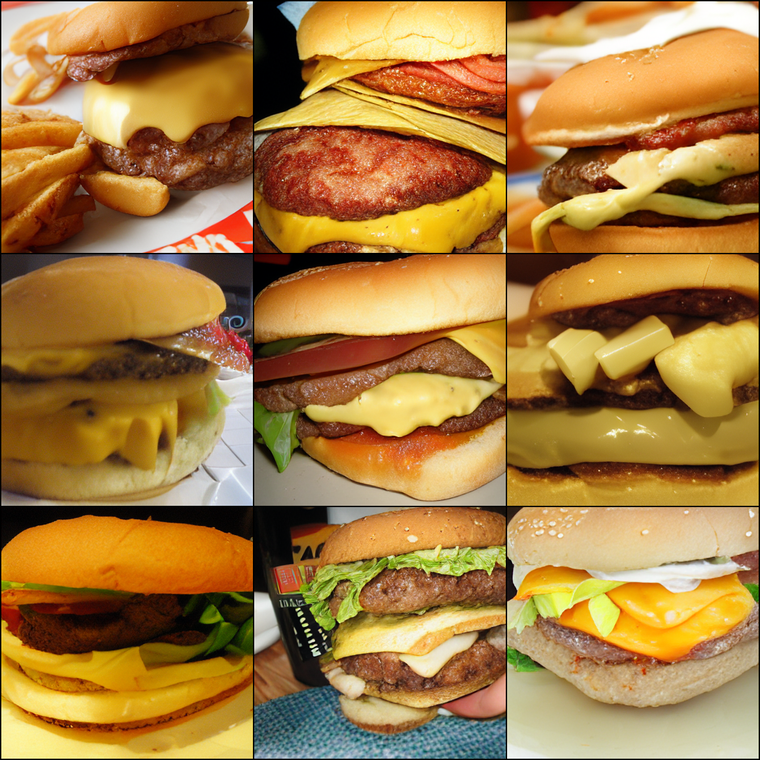}
\end{minipage}

\begin{minipage}{0.32\linewidth}
\centering
\includegraphics[width=0.95\columnwidth]{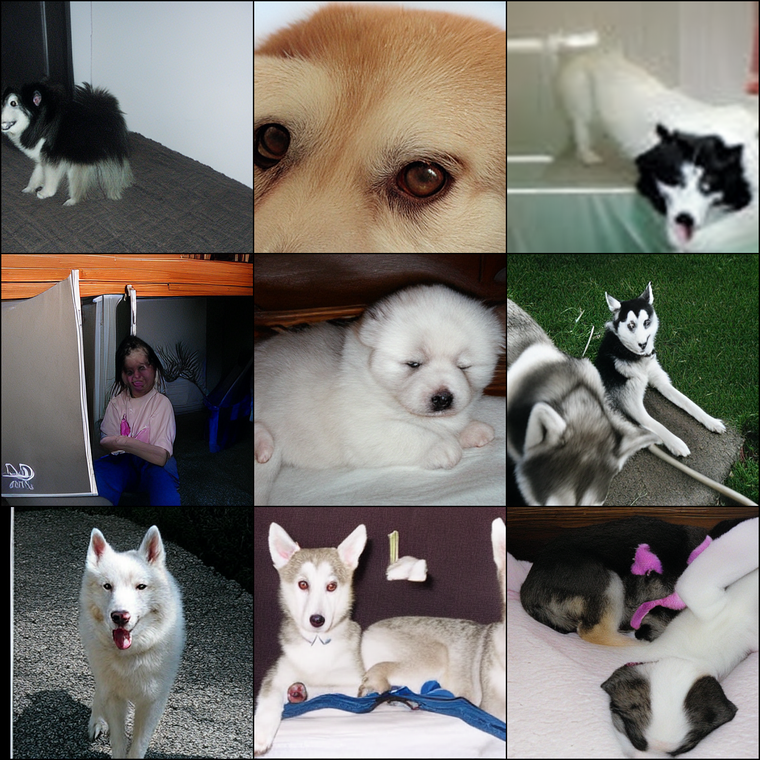}
\end{minipage}
\begin{minipage}{0.32\linewidth}
\centering
\includegraphics[width=0.95\columnwidth]{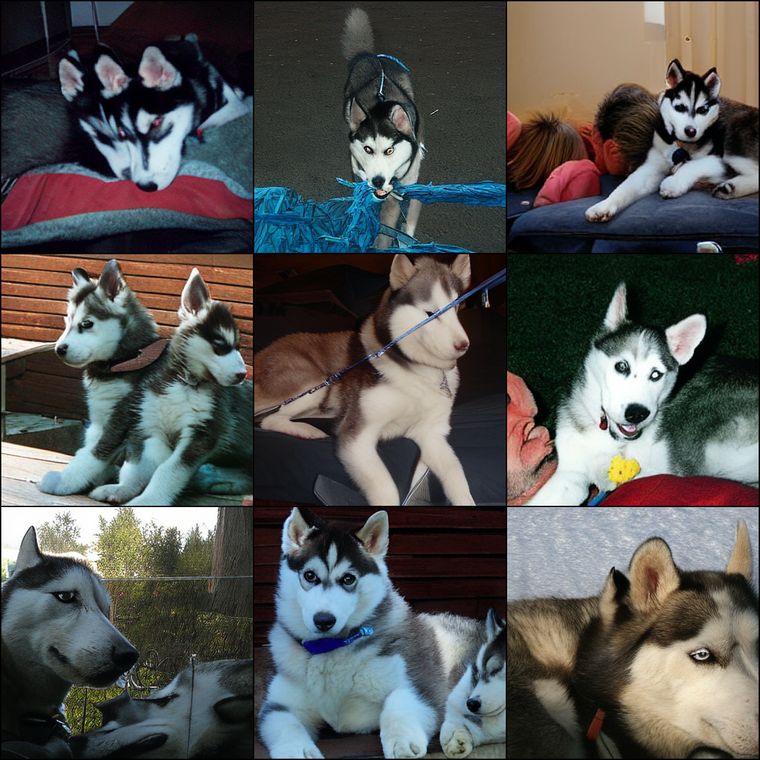}
\end{minipage}
\begin{minipage}{0.32\linewidth}
\centering
\includegraphics[width=0.95\columnwidth]{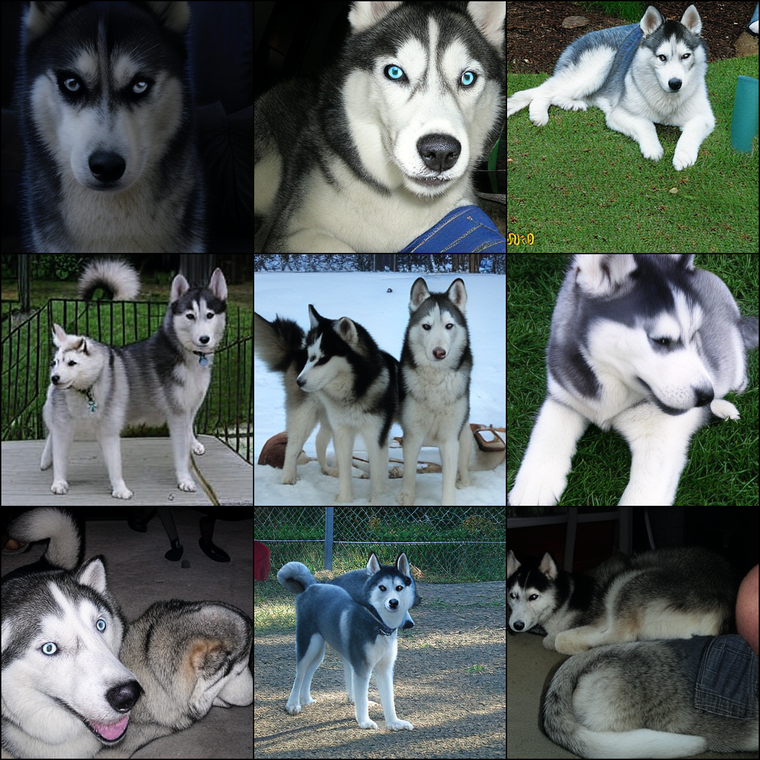}
\end{minipage}

\begin{minipage}{0.32\linewidth}
\centering
\includegraphics[width=0.95\columnwidth]{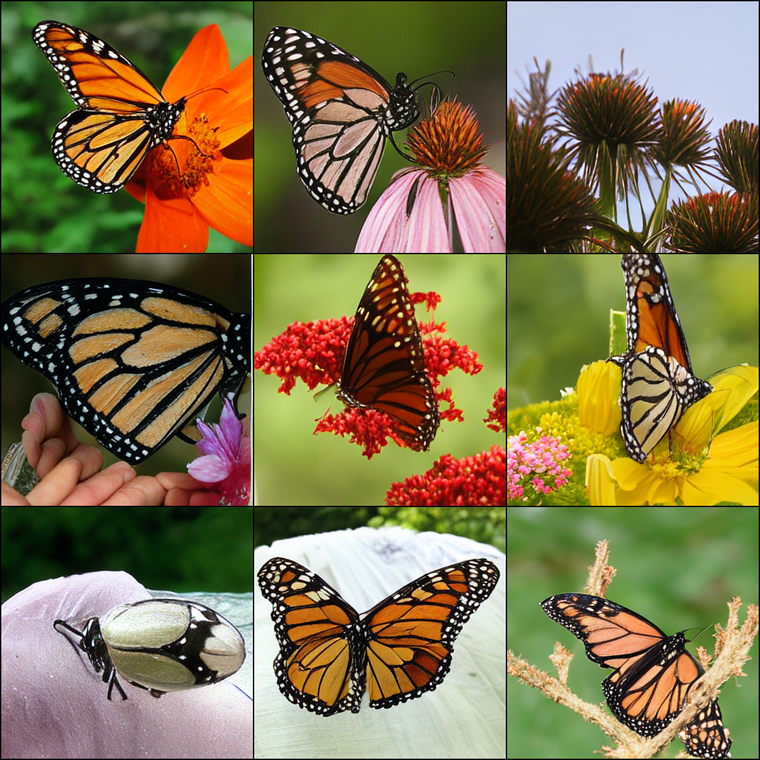}
\end{minipage}
\begin{minipage}{0.32\linewidth}
\centering
\includegraphics[width=0.95\columnwidth]{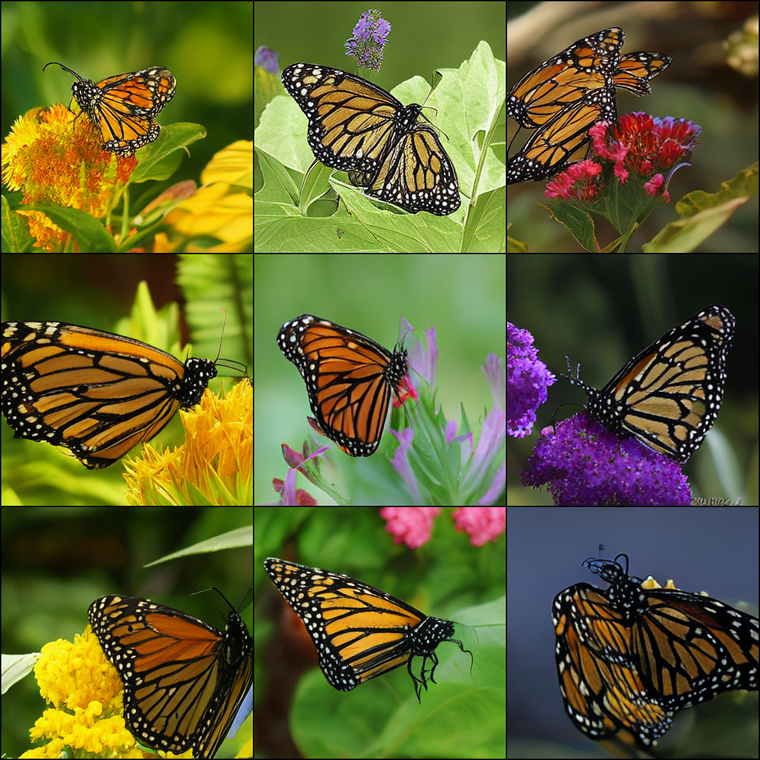}
\end{minipage}
\begin{minipage}{0.32\linewidth}
\centering
\includegraphics[width=0.95\columnwidth]{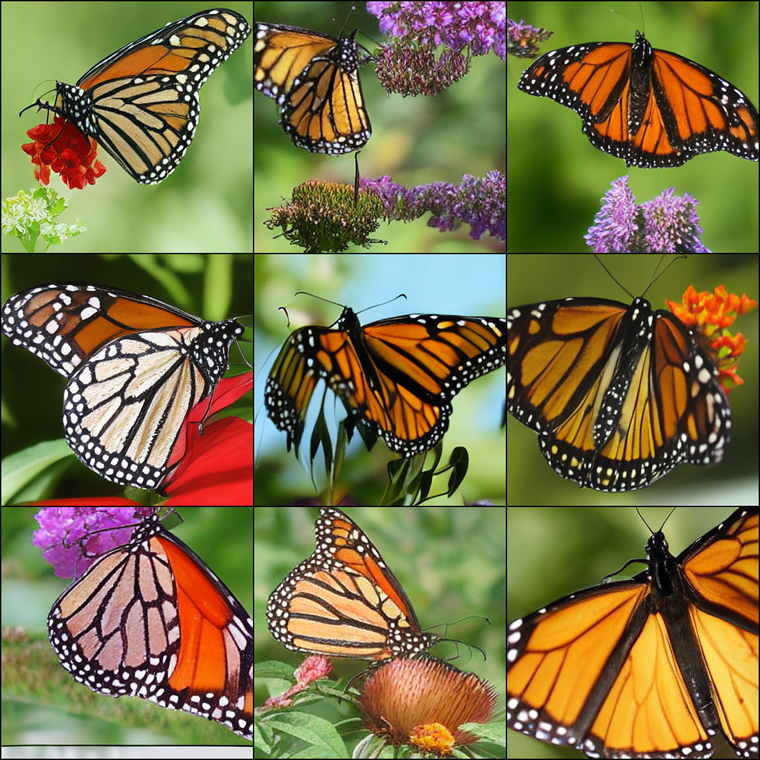}
\end{minipage}

\begin{minipage}{0.32\linewidth}
\centering
\includegraphics[width=0.95\columnwidth]{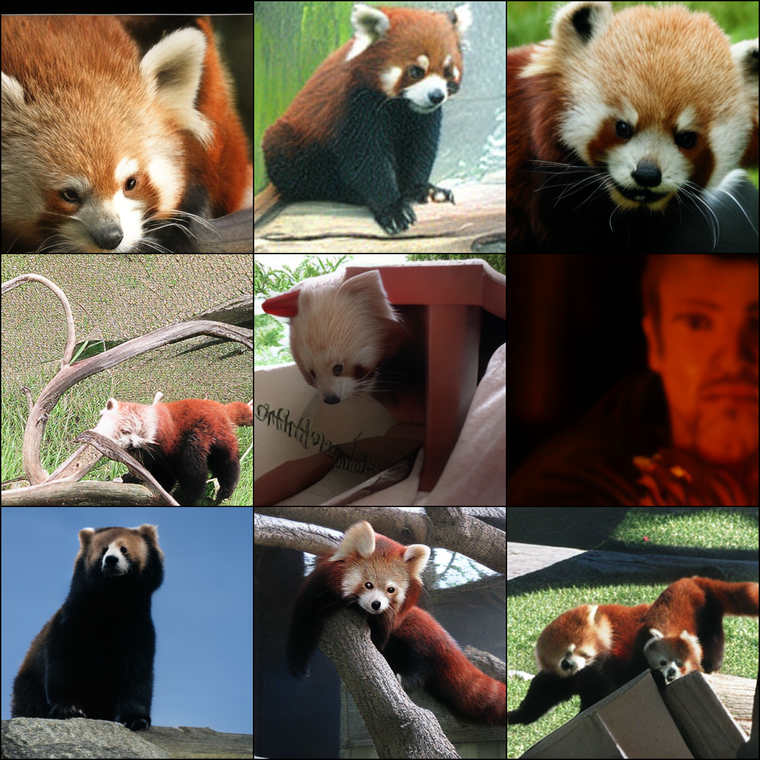}
\end{minipage}
\begin{minipage}{0.32\linewidth}
\centering
\includegraphics[width=0.95\columnwidth]{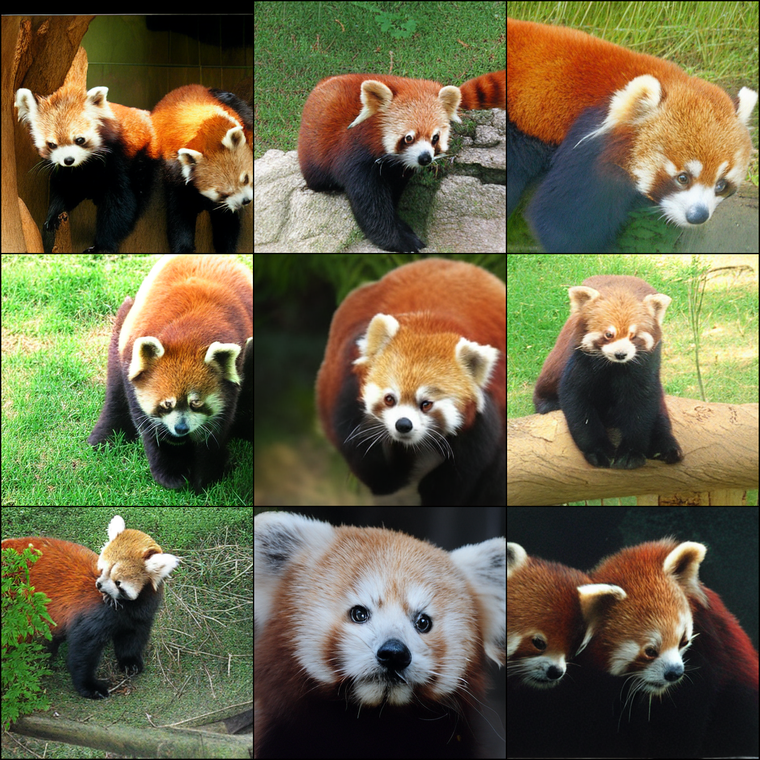}
\end{minipage}
\begin{minipage}{0.32\linewidth}
\centering
\includegraphics[width=0.95\columnwidth]{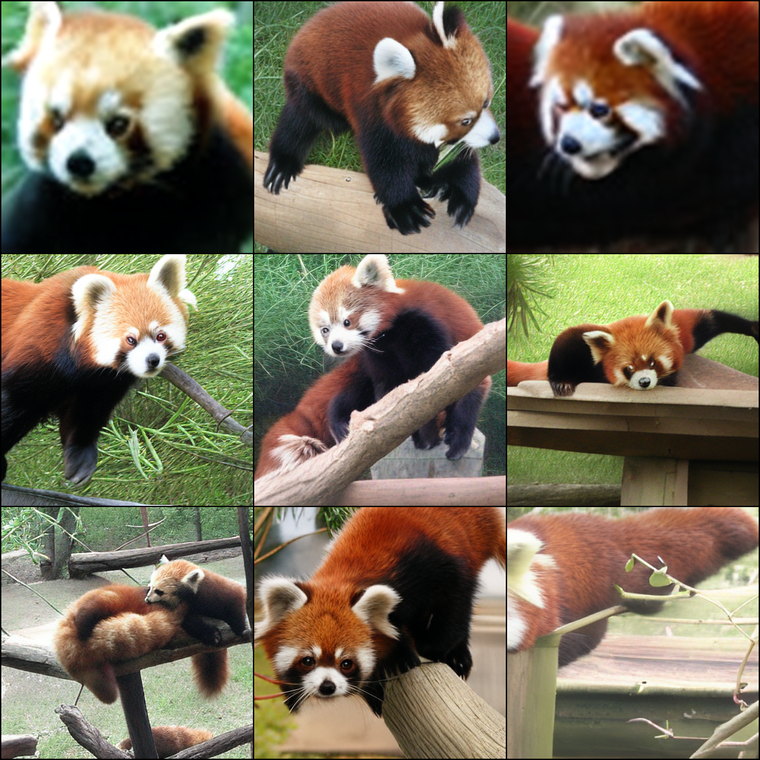}
\end{minipage}

\begin{minipage}{0.32\linewidth}
\centering
\includegraphics[width=0.95\columnwidth]{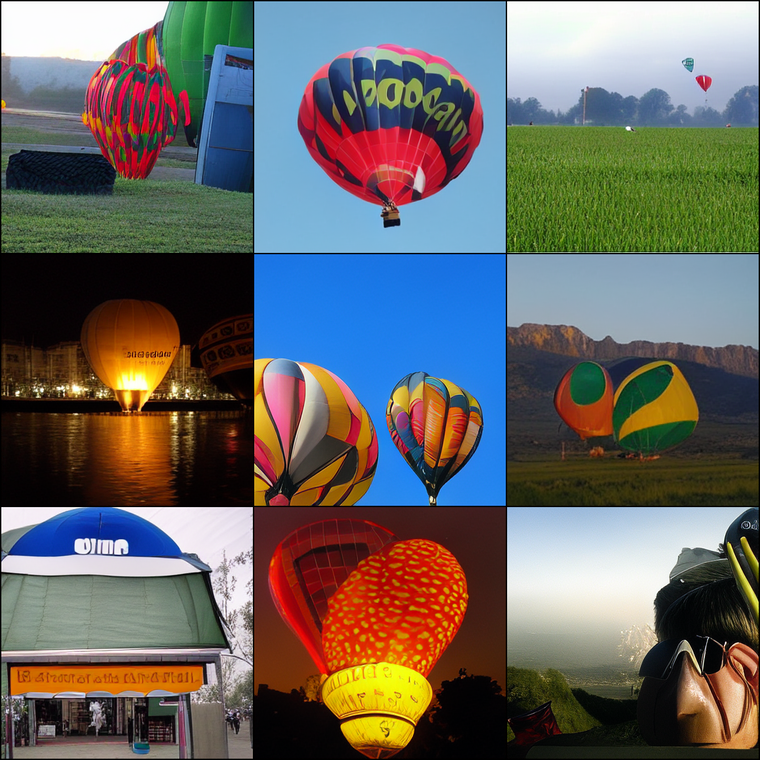}
\end{minipage}
\begin{minipage}{0.32\linewidth}
\centering
\includegraphics[width=0.95\columnwidth]{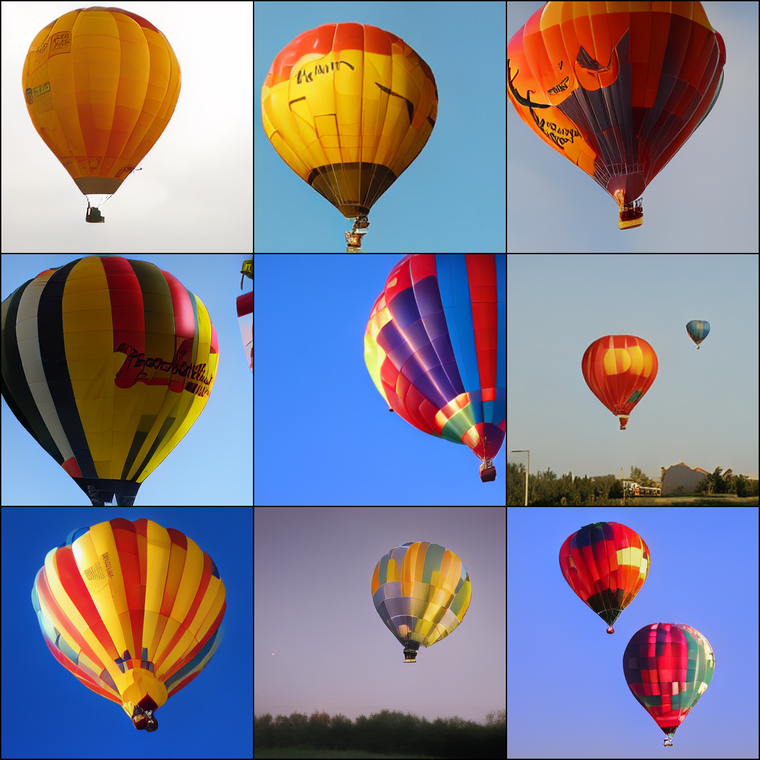}
\end{minipage}
\begin{minipage}{0.32\linewidth}
\centering
\includegraphics[width=0.95\columnwidth]{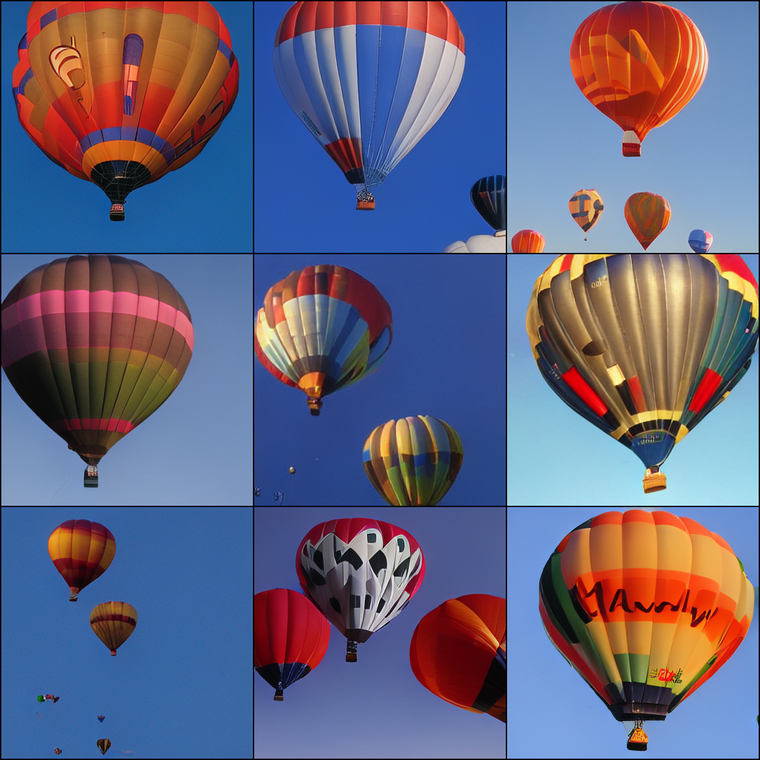}
\end{minipage}

\vspace{2mm}
\begin{minipage}{ 0.32\linewidth}
\centering
w/o Guidance
\end{minipage}
\begin{minipage}{0.32\linewidth}
\centering
+CCA (w/o Guidance)
\end{minipage}
\begin{minipage}{0.32\linewidth}
\centering
w/ CFG Guidance
\end{minipage}
\caption{Comparison of LlamaGen-L samples generated with CCA or CFG.}
\label{fig:picllamagen}
\end{figure}

\newpage

\begin{figure}[h]
\centering

\begin{minipage}{0.32\linewidth}
\centering
\includegraphics[width=0.95\columnwidth]{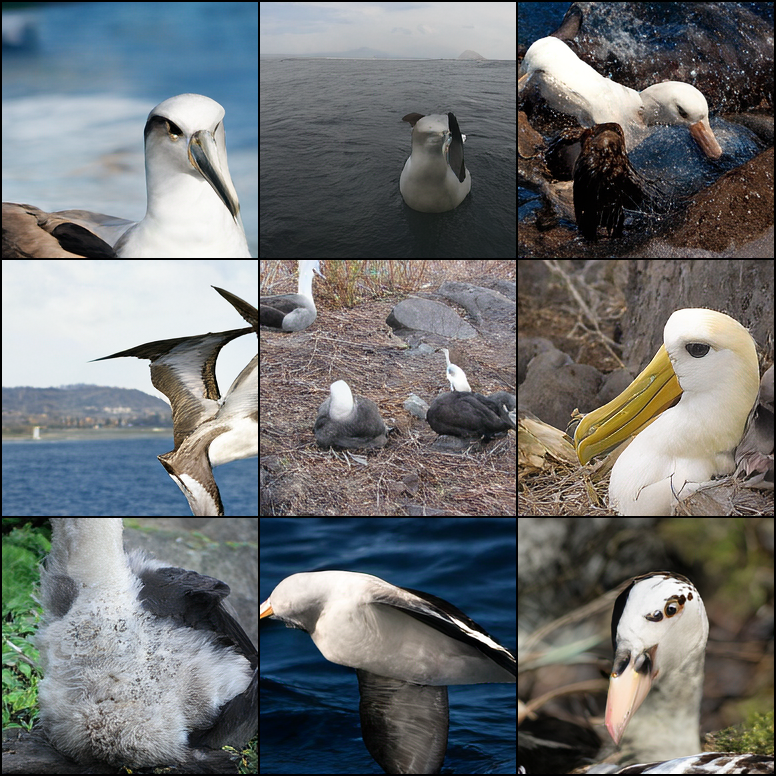}
\end{minipage}
\begin{minipage}{0.32\linewidth}
\centering
\includegraphics[width=0.95\columnwidth]{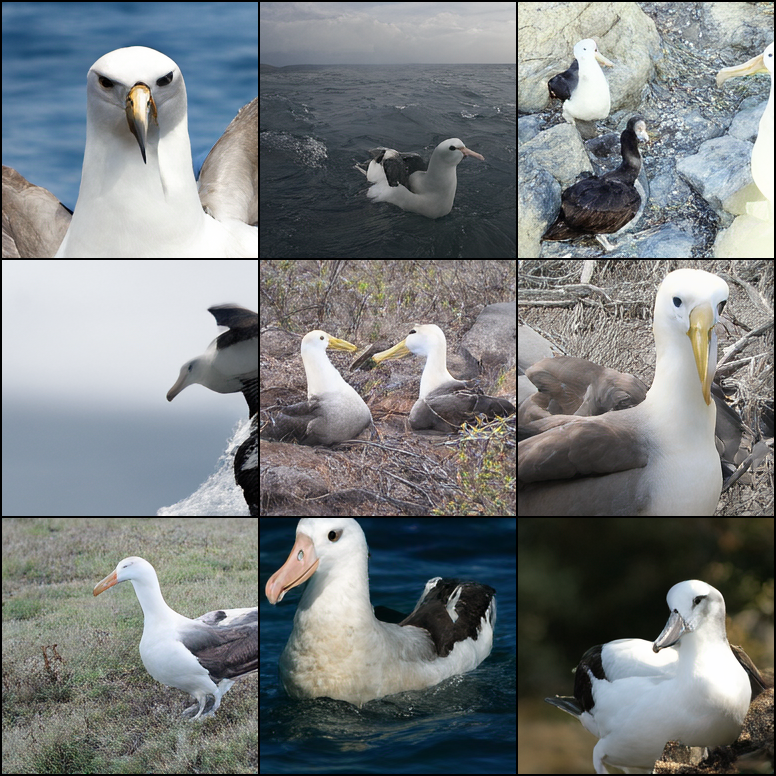}
\end{minipage}
\begin{minipage}{0.32\linewidth}
\centering
\includegraphics[width=0.95\columnwidth]{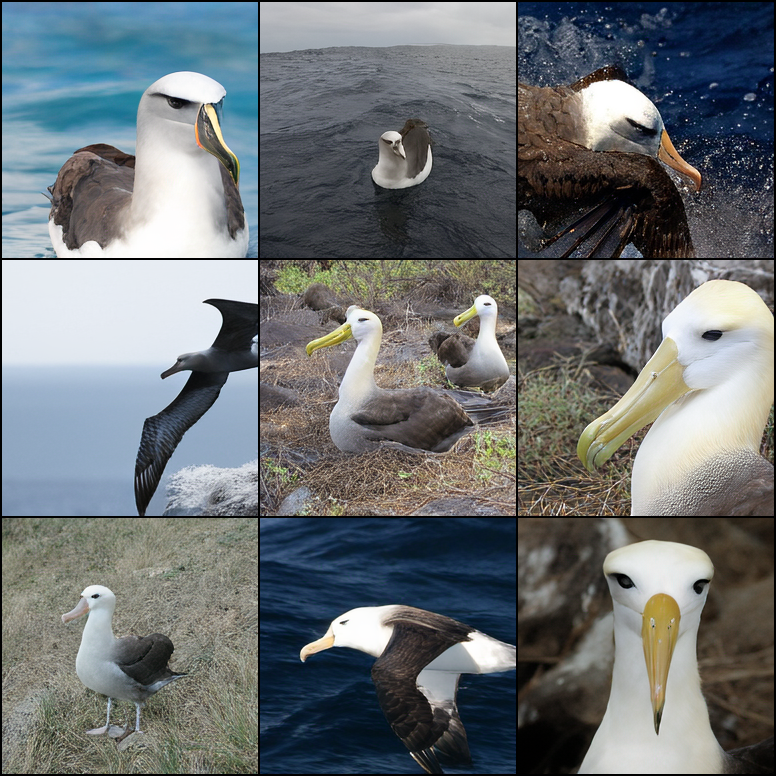}
\end{minipage}

\begin{minipage}{0.32\linewidth}
\centering
\includegraphics[width=0.95\columnwidth]{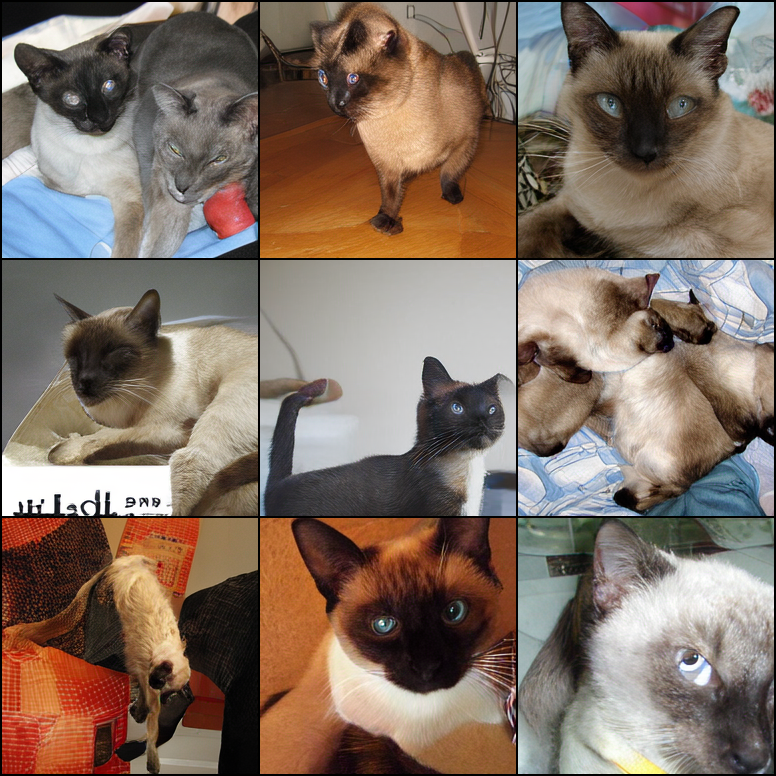}
\end{minipage}
\begin{minipage}{0.32\linewidth}
\centering
\includegraphics[width=0.95\columnwidth]{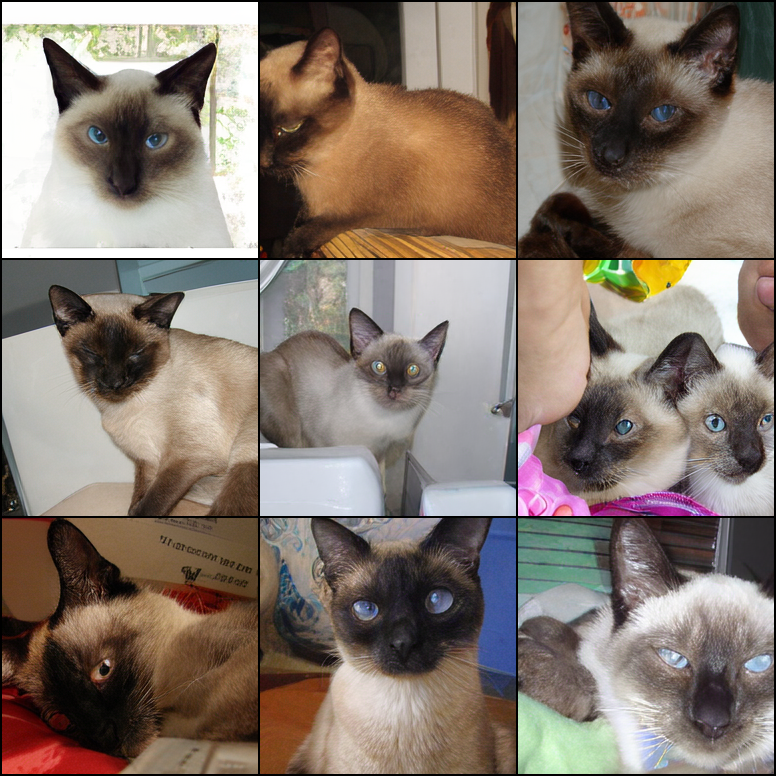}
\end{minipage}
\begin{minipage}{0.32\linewidth}
\centering
\includegraphics[width=0.95\columnwidth]{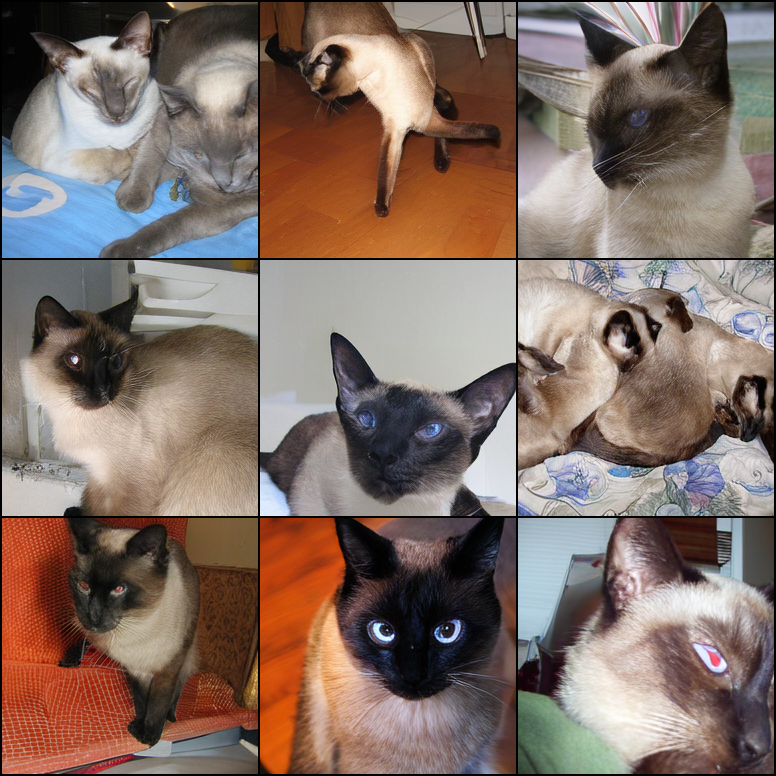}
\end{minipage}

\begin{minipage}{0.32\linewidth}
\centering
\includegraphics[width=0.95\columnwidth]{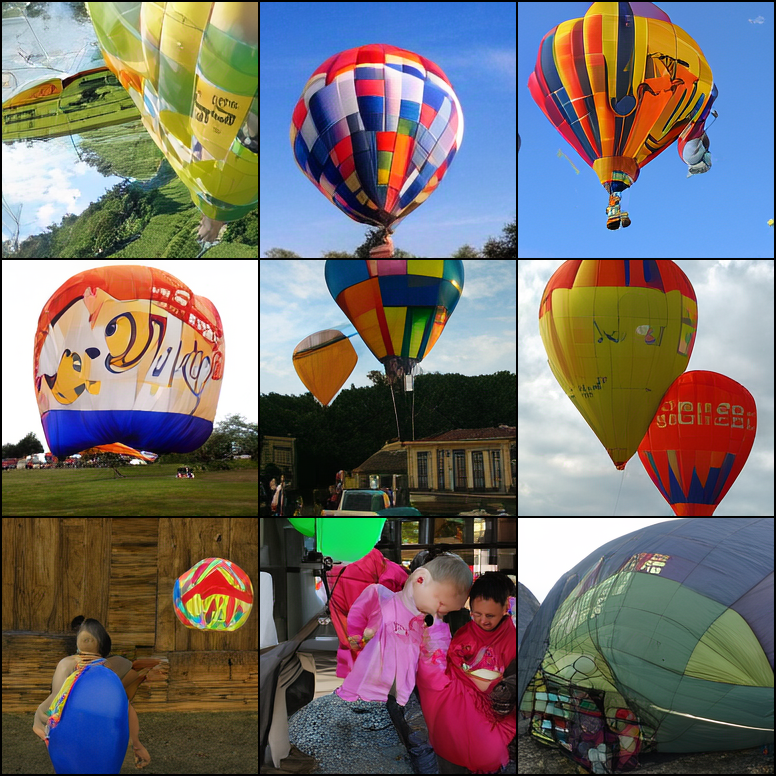}
\end{minipage}
\begin{minipage}{0.32\linewidth}
\centering
\includegraphics[width=0.95\columnwidth]{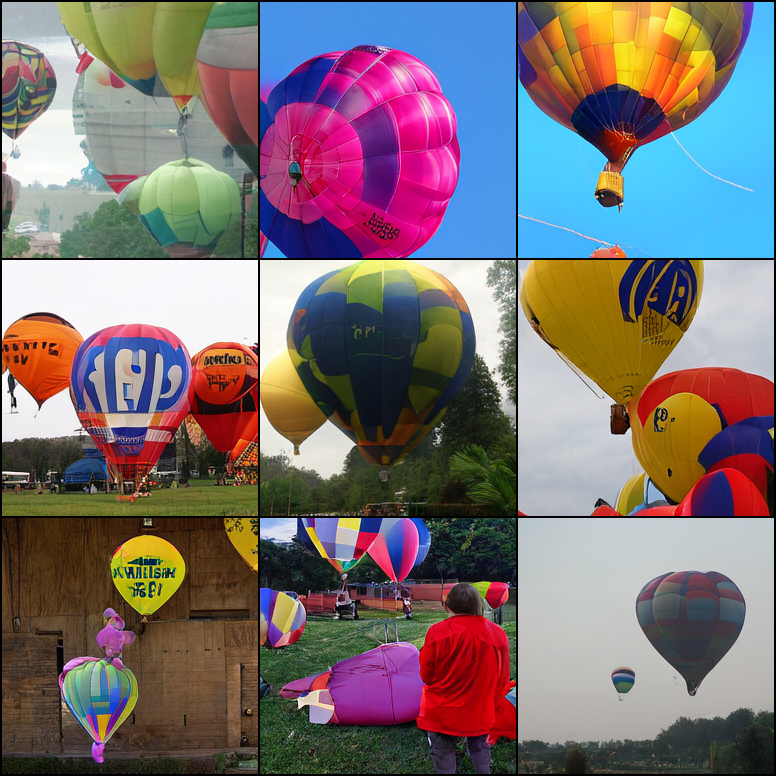}
\end{minipage}
\begin{minipage}{0.32\linewidth}
\centering
\includegraphics[width=0.95\columnwidth]{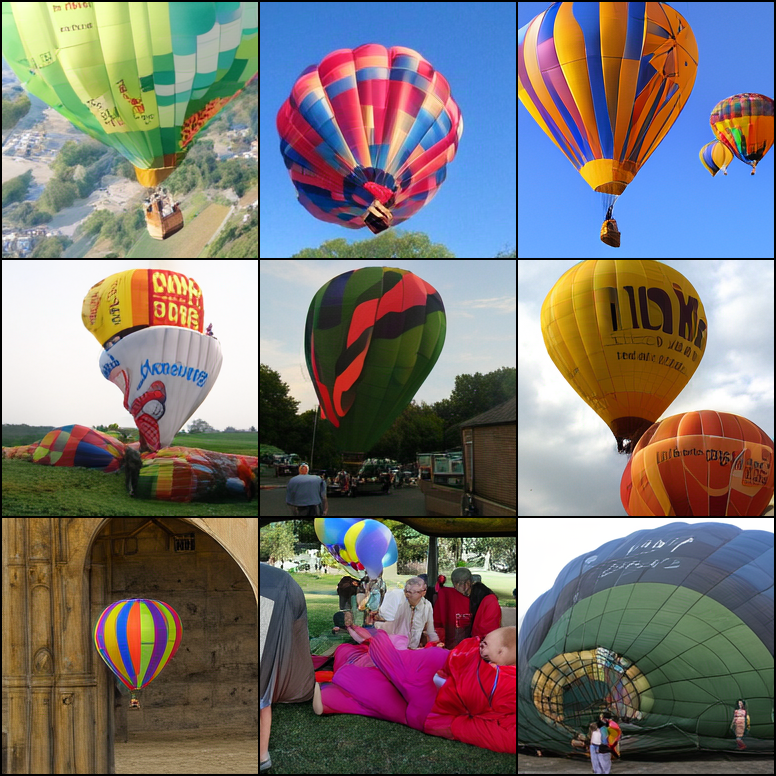}
\end{minipage}

\begin{minipage}{0.32\linewidth}
\centering
\includegraphics[width=0.95\columnwidth]{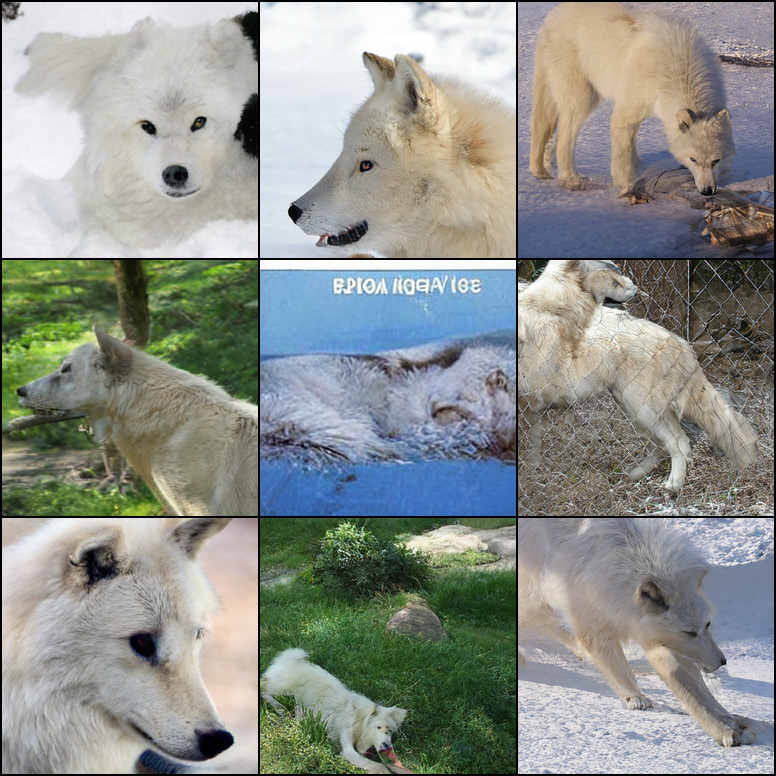}
\end{minipage}
\begin{minipage}{0.32\linewidth}
\centering
\includegraphics[width=0.95\columnwidth]{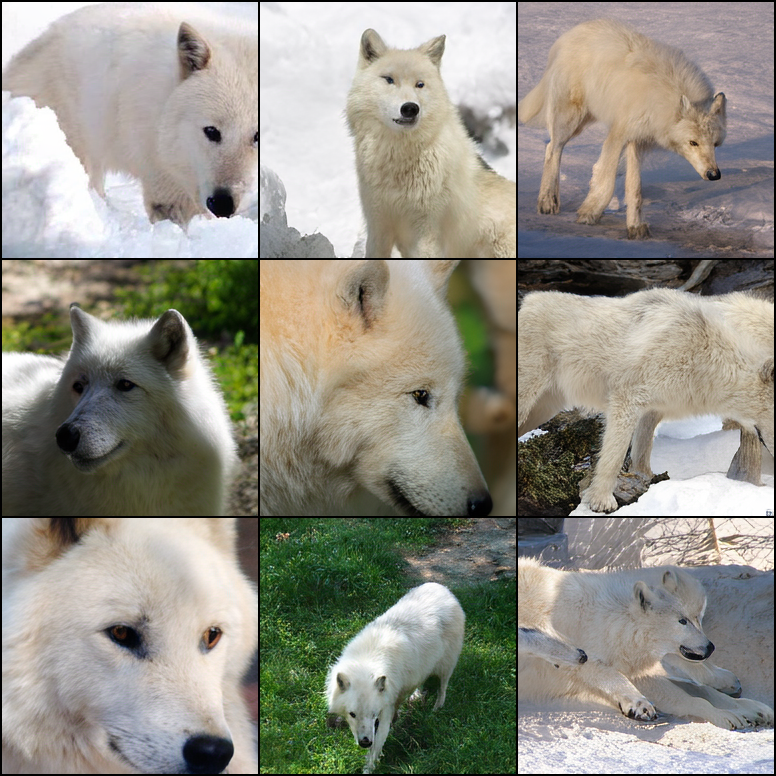}
\end{minipage}
\begin{minipage}{0.32\linewidth}
\centering
\includegraphics[width=0.95\columnwidth]{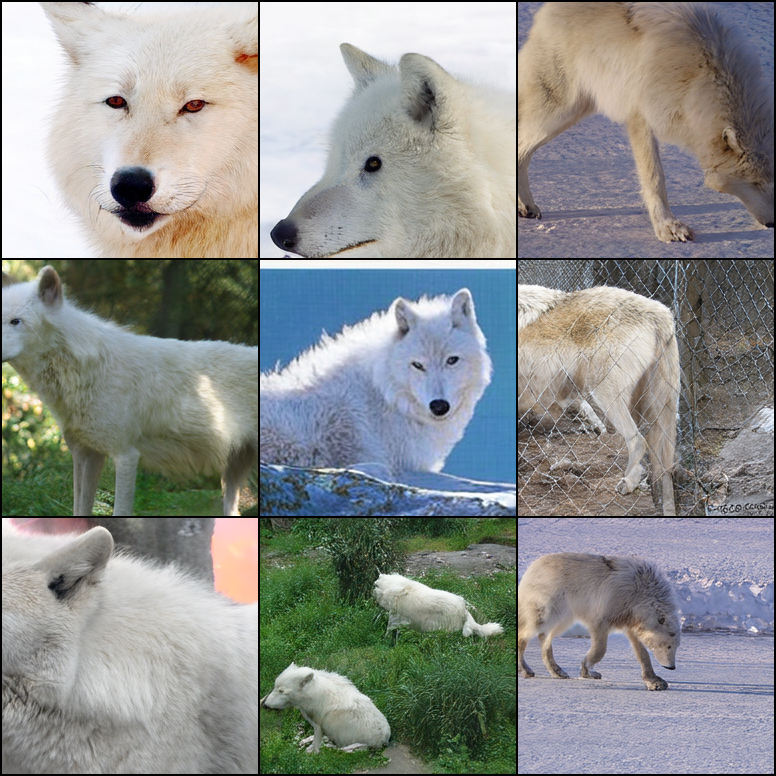}
\end{minipage}

\begin{minipage}{0.32\linewidth}
\centering
\includegraphics[width=0.95\columnwidth]{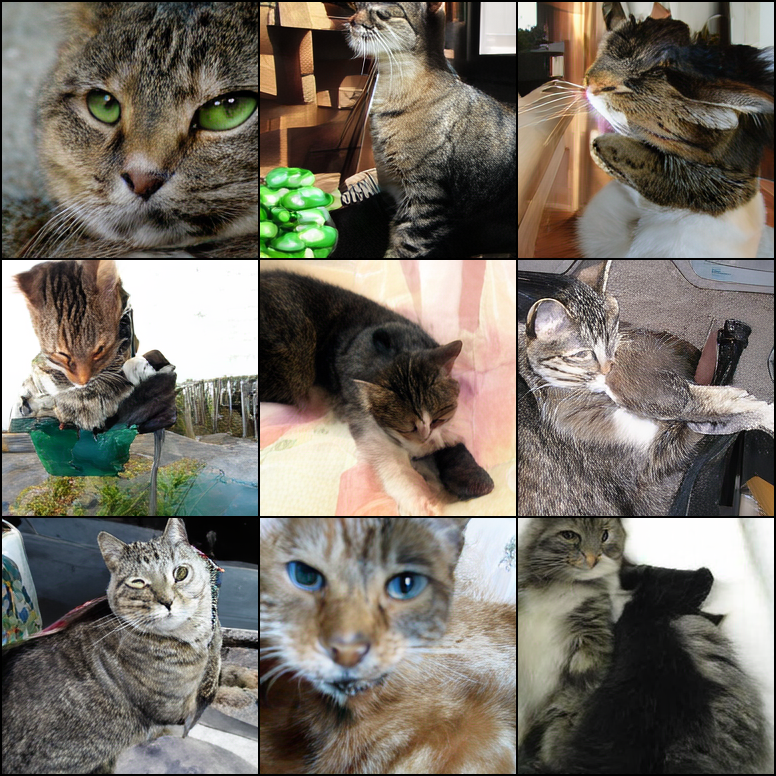}
\end{minipage}
\begin{minipage}{0.32\linewidth}
\centering
\includegraphics[width=0.95\columnwidth]{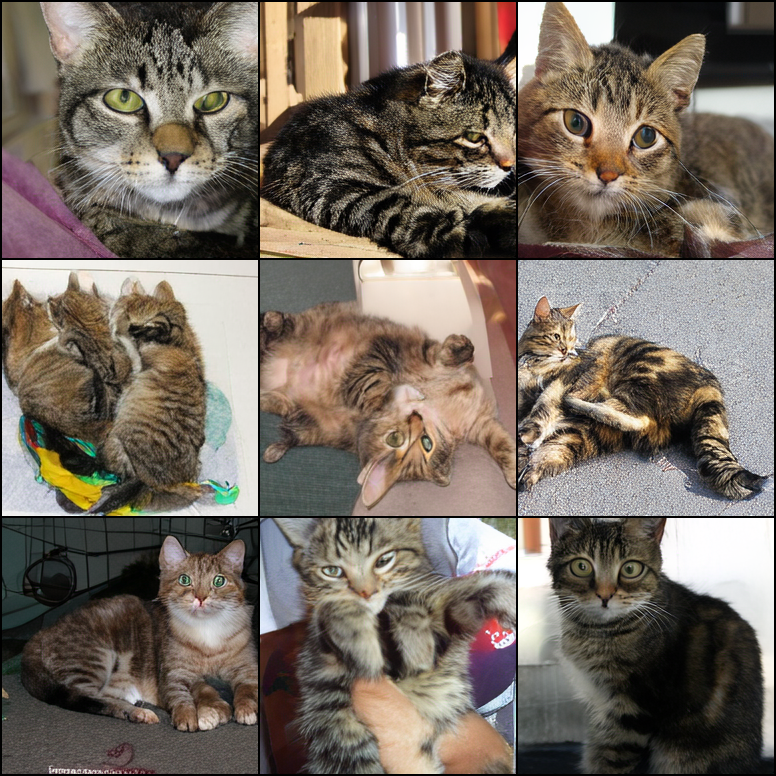}
\end{minipage}
\begin{minipage}{0.32\linewidth}
\centering
\includegraphics[width=0.95\columnwidth]{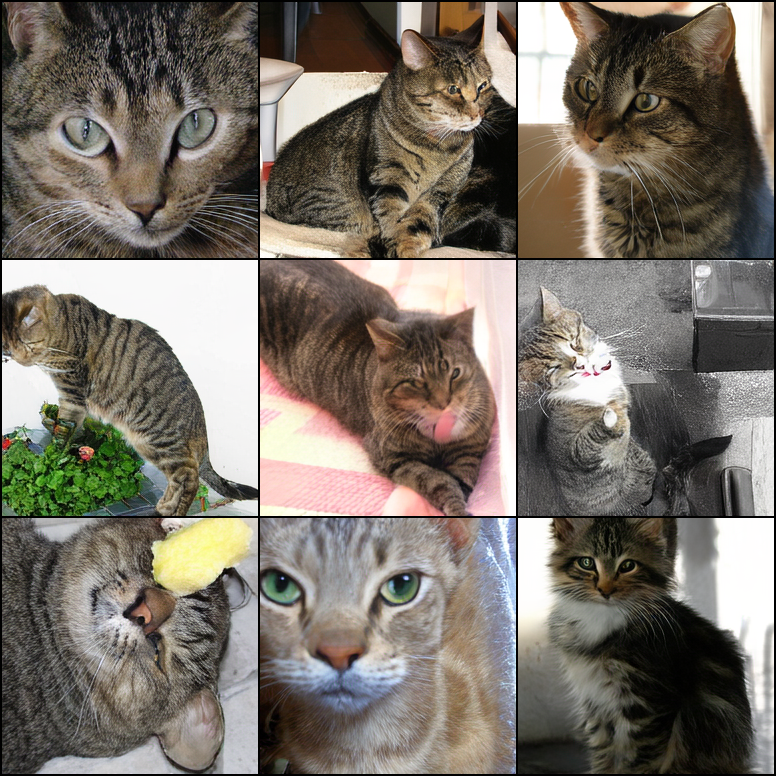}
\end{minipage}

\vspace{2mm}
\begin{minipage}{ 0.32\linewidth}
\centering
w/o Guidance
\end{minipage}
\begin{minipage}{0.32\linewidth}
\centering
+CCA (w/o Guidance)
\end{minipage}
\begin{minipage}{0.32\linewidth}
\centering
w/ CFG Guidance
\end{minipage}

\caption{Comparison of VAR-d24 samples generated with CCA or CFG.}
\label{fig:picvar}
\end{figure}
\clearpage
\newpage

\section{Theoretical Proofs}
\label{sec:proofs}
In this section, we provide the proof of Theorem \ref{theorem:NCE}.

\begin{theorem}[Noise Contrastive Estimation \label{proof1}] Let $r_\theta$ be a parameterized model which takes in an image-condition pair $(\vx, \vc)$ and outputs a scalar value $r_\theta(\vx, \vc)$. Consider the loss function: 
\begin{equation}
    \mathcal{L}^{\text{NCE}}_\theta (\vx, \vc) = - \E_{p(\vx, \vc)} \log \sigmoid (r_\theta(\vx, \vc)) - \E_{p(\vx)p(\vc)} \log \sigmoid (-r_\theta(\vx, \vc)).
\end{equation}
Given unlimited model expressivity for $r_\theta$, the optimal solution for minimizing $\mathcal{L}^{\text{NCE}}_\theta$ satisfies
\begin{equation}
    r_{\theta}^*(\vx, \vc) =  \log \frac{p(\vx|\vc)}{p(\vx)}.
\end{equation}
\end{theorem}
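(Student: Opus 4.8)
The plan is to treat the loss $\mathcal{L}^{\text{NCE}}_\theta$ as a functional of the scalar function $r_\theta$ and minimize it pointwise, exploiting the fact that with unlimited expressivity we may choose $r_\theta(\vx,\vc)$ independently for each pair $(\vx,\vc)$. First I would rewrite both expectations against a common base measure. The first term integrates $-\log\sigmoid(r_\theta(\vx,\vc))$ against $p(\vx,\vc) = p(\vc)\,p(\vx|\vc)$, and the second integrates $-\log\sigmoid(-r_\theta(\vx,\vc))$ against $p(\vx)\,p(\vc)$. Writing both as integrals over $(\vx,\vc)$ with respect to, say, $p(\vc)\,\mathrm{d}\vx$ (or any dominating measure), the integrand at a fixed $(\vx,\vc)$ becomes
\begin{equation*}
    f_{\vx,\vc}(t) = -\,p(\vx|\vc)\,\log\sigmoid(t) \;-\; p(\vx)\,\log\sigmoid(-t),
\end{equation*}
where $t = r_\theta(\vx,\vc)$ is now a free real variable. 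Since the outer integral is a sum (integral) of these independent one-dimensional problems with nonnegative weights, minimizing the functional reduces to minimizing $f_{\vx,\vc}(t)$ over $t\in\R$ for each $(\vx,\vc)$ (at least wherever $p(\vx|\vc)$ and $p(\vx)$ are not both zero).

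Next I would carry out the one-dimensional optimization. Using $\log\sigmoid(t) = -\log(1+e^{-t})$ and $\log\sigmoid(-t) = -\log(1+e^{t})$, differentiate: $\frac{\mathrm{d}}{\mathrm{d}t}\log\sigmoid(t) = \sigmoid(-t)$ and $\frac{\mathrm{d}}{\mathrm{d}t}\log\sigmoid(-t) = -\sigmoid(t)$. Hence
\begin{equation*}
    f_{\vx,\vc}'(t) = -\,p(\vx|\vc)\,\sigmoid(-t) \;+\; p(\vx)\,\sigmoid(t).
\end{equation*}
Setting this to zero and using $\sigmoid(-t) = 1-\sigmoid(t)$ gives $p(\vx|\vc)\,(1-\sigmoid(t)) = p(\vx)\,\sigmoid(t)$, i.e. $\sigmoid(t) = \frac{p(\vx|\vc)}{p(\vx|\vc)+p(\vx)}$. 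Inverting the logistic via $t = \log\frac{\sigmoid(t)}{1-\sigmoid(t)}$ yields $t^* = \log\frac{p(\vx|\vc)}{p(\vx)}$, which is exactly the claimed optimizer $r_\theta^*(\vx,\vc)$. To confirm this is a minimum rather than a stationary point of another type, I would note $f_{\vx,\vc}''(t) = p(\vx|\vc)\,\sigmoid'(-t) + p(\vx)\,\sigmoid'(t) > 0$ whenever the weights are not both zero (since $\sigmoid' > 0$), so $f_{\vx,\vc}$ is strictly convex and the critical point is the unique global minimizer.

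The main obstacle is not the calculus but the handoff between the functional and the pointwise problem: one must argue that a function $r_\theta$ achieving the pointwise minimum everywhere is admissible (``unlimited model expressivity'' is doing this work) and that the set where $p(\vx)=0$ or $p(\vx|\vc)=0$ — where $\log\frac{p(\vx|\vc)}{p(\vx)}$ may be $\pm\infty$ or the minimizer is not unique — is negligible or can be handled by a limiting argument. For the purposes of this paper I would simply remark that on the support of the relevant distributions the optimum is well-defined and given by the stated formula, which is the standard NCE conclusion; a fully rigorous treatment would restrict attention to $\{(\vx,\vc): p(\vx)>0\}$ and note the objective is insensitive to $r_\theta$ off that set. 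I would also mention in passing that this is the classical logistic-regression/NCE identity (the Bayes-optimal discriminator between the joint and the product of marginals has log-odds equal to the pointwise mutual information), framing the result as a specialization of Gutmann--Hyvärinen NCE.
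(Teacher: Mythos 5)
Your proof is correct, and it takes a genuinely different route from the paper's. The paper avoids calculus entirely: it groups the two integrands at each $(\vx,\vc)$ into a pair of two-point distributions $Q_{\vx,\vc}=\bigl\{\tfrac{p(\vx|\vc)}{p(\vx|\vc)+p(\vx)},\tfrac{p(\vx)}{p(\vx|\vc)+p(\vx)}\bigr\}$ and $P_{\vx,\vc}^\theta=\{\sigmoid(r_\theta),1-\sigmoid(r_\theta)\}$, rewrites the loss as a nonnegatively-weighted integral of the cross-entropy $H(Q_{\vx,\vc},P_{\vx,\vc}^\theta)$, and then uses $H(Q,P)=\KL(Q\|P)+H(Q)\geq H(Q)$ with equality iff $P=Q$, which immediately forces $\sigmoid(r_\theta^*)=\tfrac{p(\vx|\vc)}{p(\vx|\vc)+p(\vx)}$. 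You instead reduce to the same pointwise problem but solve it by first-order conditions and verify a global minimum by strict convexity. The two arguments are structurally isomorphic — both factor out $p(\vc)$, reduce to an independent scalar problem at each $(\vx,\vc)$, and land on the same logistic equation — but the paper's KL route bundles the second-order check into the nonnegativity of KL, making the minimality argument one line, whereas your route makes the convexity explicit and is perhaps more self-contained for a reader who does not want to invoke information-theoretic identities. Your additional remarks about the set where $p(\vx)=0$ and about expressivity are sound caveats that the paper leaves implicit; they do not change the conclusion. One presentational nit: when you say you integrate ``with respect to $p(\vc)\,\mathrm{d}\vx$,'' you mean $p(\vc)\,\mathrm{d}\vx\,\mathrm{d}\vc$ — the paper writes both terms as integrals over $(\vx,\vc)$ and pulls the common factor $p(\vc)$ into the weight, which is exactly what you need for your $f_{\vx,\vc}(t)$ to have weights $p(\vx|\vc)$ and $p(\vx)$.
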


\begin{proof}
First, we construct two binary (Bernoulli) distributions:
\begin{equation*}
    Q_{\vx, \vc} := \{ \frac{p(\vx, \vc)}{p(\vx, \vc)+p(\vx) p(\vc)}, \frac{p(\vx) p(\vc)}{p(\vx, \vc)+p(\vx) p(\vc)}\} = \{ \frac{p(\vx| \vc)}{p(\vx| \vc)+p(\vx)},\frac{p(\vx)}{p(\vx| \vc)+p(\vx)}\}
\end{equation*}
\begin{equation*}
    P_{\vx, \vc}^\theta := \{\frac{e^{r_\theta(\vx, \vc)}}{e^{r_\theta(\vx, \vc)}+1}, \frac{1}{e^{r_\theta(\vx, \vc)}+1}\} = \{ \sigmoid (r_\theta(\vx, \vc)), 1-\sigmoid (r_\theta(\vx, \vc))\} 
\end{equation*}
Then we rewrite $\mathcal{L}^{\text{NCE}}_\theta (\vx, \vc)$ as
\begin{align*}
    \mathcal{L}^{\text{NCE}}_\theta (\vx, \vc) &= - \E_{p(\vx, \vc)} \log \sigmoid (r_\theta(\vx, \vc)) - \E_{p(\vx)p(\vc)} \log \sigmoid (-r_\theta(\vx, \vc))\\
    &= -\int \Big[ p(\vx, \vc) \log \sigmoid (r_\theta(\vx, \vc)) +  p(\vx)p(\vc) \log \sigmoid (-r_\theta(\vx, \vc)) \Big] \mathrm{d} \vx \mathrm{d}\vc \\
    &= -\int \Big[(p(\vx, \vc) + p(\vx)p(\vc))\Big]\\
    &\hspace{-5mm}\Big[ \frac{p(\vx, \vc)}{p(\vx, \vc)+p(\vx) p(\vc)}\log \sigmoid (r_\theta(\vx, \vc)) +   \frac{p(\vx) p(\vc)}{p(\vx, \vc)+p(\vx) p(\vc)} \log \big[1-\sigmoid (r_\theta(\vx, \vc))\big] \Big] \mathrm{d} \vx \mathrm{d}\vc \\
    &=\int \Big[(p(\vx, \vc) + p(\vx)p(\vc))\Big] H (Q_{\vx, \vc}, P_{\vx, \vc}^\theta)\mathrm{d} \vx \mathrm{d}\vc\\
    &=\int \Big[(p(\vx, \vc) + p(\vx)p(\vc))\Big] \Big[\KL(Q_{\vx, \vc}\| P_{\vx, \vc}^\theta) + H(Q_{\vx, \vc})\Big]\mathrm{d} \vx \mathrm{d}\vc
\end{align*}
Here $H (Q_{\vx, \vc}, P_{\vx, \vc}^\theta)$ represents the cross-entropy between distributions $Q_{\vx, \vc}$ and $P_{\vx, \vc}^\theta$. $H(Q_{\vx, \vc})$ is the entropy of $Q_{\vx, \vc}$, which can be regarded as a constant number with respect to parameter $\theta$. Due to the theoretical properties of KL-divergence, we have  
\begin{align*}
\mathcal{L}^{\text{NCE}}_\theta (\vx, \vc) &= \int \Big[(p(\vx, \vc) + p(\vx)p(\vc))\Big] \Big[\KL(Q_{\vx, \vc}\| P_{\vx, \vc}^\theta) + H(Q_{\vx, \vc})\Big]\mathrm{d} \vx \mathrm{d}\vc\\
&\geq \int\Big[ (p(\vx, \vc) + p(\vx)p(\vc))\Big] H(Q_{\vx, \vc})\mathrm{d} \vx \mathrm{d}\vc\\
\end{align*}
constantly hold. The equality holds if and only if $Q_{\vx, \vc}= P_{\vx, \vc}^\theta$, such that
\begin{equation*}
    \sigmoid (r_\theta(\vx, \vc)) = \frac{e^{r_\theta(\vx, \vc)}}{e^{r_\theta(\vx, \vc)}+1} =\frac{p(\vx, \vc)}{p(\vx, \vc)+p(\vx) p(\vc)}
\end{equation*}
\begin{equation*}
    r_\theta(\vx, \vc) =  \log \frac{p(\vx,\vc)}{p(\vx)p(\vc)}= \log \frac{p(\vx|\vc)}{p(\vx)}
\end{equation*}
\end{proof}
\newpage

\section{Theoretical Analyses of the Normalizing Constant}
\label{sec:analy}
 We omit a normalizing constant in Eq. \ref{eq:residual_target} for brevity when deriving CCA. Strictly speaking, the target sampling distribution should be:
\begin{equation*}
    p^{\text{sample}}(\vx|\vc) = \textcolor{blue}{\frac{1}{Z(\vc)}} p(\vx|\vc) [\frac{p(\vx|\vc)}{p(\vx)}]^s,
\end{equation*}
such that
\begin{equation*}
    \frac{1}{s} \log \frac{p^{\text{sample}}(\vx|\vc)}{p(\vx|\vc)}  =  \log \frac{p(\vx|\vc)}{p(\vx)} \textcolor{blue}{- \frac{1}{s} \log Z(\vc)}.
\end{equation*}
The normalizing constant $Z(\vc)$ ensures that $p^{\text{sample}}(\vx|\vc)$ is properly normalized, i.e., $\int p^{\text{sample}}(\vx|\vc)\mathrm{d} \vx = 1$. We have $Z(\vc) = \int p(\vx|\vc) [\frac{p(\vx|\vc)}{p(\vx)}]^s \mathrm{d} \vx = \E_{p(\vx|\vc)} [\frac{p(\vx|\vc)}{p(\vx)}]^s$.

To mitigate the additional effects introduced by $Z(\vc)$, in our practical algorithm, we introduce a new training parameter $\lambda$ to bias the optimal solution for Noise Contrastive Estimation. Below, we present a result that is stronger than Theorem \ref{theorem:NCE}.

\begin{theorem} Let $\lambda_\vc >0$ be a scalar function conditioned only on $\vc$. Consider the loss function:
\begin{equation}
    \mathcal{L}^{\text{NCE}}_\theta (\vx, \vc) = - \E_{p(\vx, \vc)} \log \sigmoid (r_\theta(\vx, \vc)) - \textcolor{blue}{\lambda_\vc} \E_{p(\vx)p(\vc)} \log \sigmoid (-r_\theta(\vx, \vc)).
\end{equation}
Given unlimited model expressivity for $r_\theta$, the optimal solution for minimizing $\mathcal{L}^{\text{NCE}}_\theta$ satisfies
\begin{equation}
    r_{\theta}^*(\vx, \vc) =  \log \frac{p(\vx|\vc)}{p(\vx)} \textcolor{blue}{- \log \lambda_\vc}.
\end{equation}
\end{theorem}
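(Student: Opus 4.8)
The plan is to mirror the proof of Theorem~\ref{proof1} almost verbatim, since the only change is a $\vc$-dependent scalar weight $\lambda_\vc$ multiplying the second expectation. First I would rewrite the loss as a single integral over $\vx$ and $\vc$, grouping the two terms as a weighted combination: instead of the weights $p(\vx,\vc)$ and $p(\vx)p(\vc)$, the second term now carries weight $\lambda_\vc\, p(\vx)p(\vc)$. So the natural normalization is to factor out $p(\vx,\vc) + \lambda_\vc\, p(\vx)p(\vc)$, which (dividing by $p(\vc)$, assumed positive) equals $p(\vx|\vc) + \lambda_\vc\, p(\vx)$ up to a $\vc$-only factor.

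Next I would define the two Bernoulli distributions analogously. The model-side one is unchanged, $P^\theta_{\vx,\vc} = \{\sigmoid(r_\theta(\vx,\vc)),\, 1-\sigmoid(r_\theta(\vx,\vc))\}$, while the target-side one becomes
\begin{equation*}
Q_{\vx,\vc} := \Big\{ \frac{p(\vx,\vc)}{p(\vx,\vc)+\lambda_\vc\, p(\vx)p(\vc)},\; \frac{\lambda_\vc\, p(\vx)p(\vc)}{p(\vx,\vc)+\lambda_\vc\, p(\vx)p(\vc)} \Big\}.
\end{equation*}
With these definitions the integrand factors as $\big[p(\vx,\vc)+\lambda_\vc\, p(\vx)p(\vc)\big]\, H(Q_{\vx,\vc}, P^\theta_{\vx,\vc})$, and the cross-entropy decomposes as $\KL(Q_{\vx,\vc}\|P^\theta_{\vx,\vc}) + H(Q_{\vx,\vc})$. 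Since $\lambda_\vc>0$ the prefactor is nonnegative, so the loss is minimized exactly when $\KL(Q_{\vx,\vc}\|P^\theta_{\vx,\vc})=0$ pointwise, i.e.\ $P^\theta_{\vx,\vc}=Q_{\vx,\vc}$.

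Finally I would solve $\sigmoid(r_\theta(\vx,\vc)) = \frac{p(\vx,\vc)}{p(\vx,\vc)+\lambda_\vc p(\vx)p(\vc)}$ for $r_\theta$, which gives $r_\theta(\vx,\vc) = \log\frac{p(\vx,\vc)}{\lambda_\vc\, p(\vx)p(\vc)} = \log\frac{p(\vx|\vc)}{p(\vx)} - \log\lambda_\vc$, the claimed optimum. The argument is essentially routine; the only point requiring a little care is the bookkeeping of the $\vc$-only factors when passing between $p(\vx,\vc)$ and $p(\vx|\vc)$, and noting that a scalar $\lambda_\vc$ depending only on $\vc$ is legitimately absorbed into these Bernoulli weights without affecting pointwise optimization over $\vx$. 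I expect no substantive obstacle beyond ensuring $\lambda_\vc>0$ so that the KL-divergence prefactor stays nonnegative and the unlimited-expressivity assumption so that the pointwise minimizer is attainable.
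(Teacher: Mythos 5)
Your proposal is correct and follows essentially the same route as the paper: redefine the target Bernoulli distribution $Q_{\vx,\vc}$ with the $\lambda_\vc$-weighted second component, factor out the prefactor $p(\vx,\vc)+\lambda_\vc\,p(\vx)p(\vc)$, decompose the cross-entropy into KL plus entropy, and solve $P^\theta_{\vx,\vc}=Q_{\vx,\vc}$ for $r_\theta$. The paper's own proof is exactly this reduction to Theorem~\ref{proof1} with the modified $Q_{\vx,\vc}$, so there is no substantive difference.
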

\begin{proof} We omit the full proof here, as it requires only a redefinition of the distributions $Q_{\vx, \vc}$ from the proof of Theorem \ref{proof1}:
\begin{equation*}
    Q_{\vx, \vc} := \{  \frac{p(\vx, \vc)}{p(\vx, \vc)+\lambda_\vc p(\vx) p(\vc)}, \frac{\lambda_\vc p(\vx) p(\vc)}{p(\vx, \vc)+\lambda_\vc p(\vx) p(\vc)}\} = \{ \frac{p(\vx| \vc)}{p(\vx| \vc)+\lambda_\vc p(\vx)},\frac{\lambda_\vc p(\vx)}{p(\vx| \vc)+\lambda_\vc p(\vx)}\}
\end{equation*}
Then we can follow the steps in the proof of Theorem \ref{proof1} to arrive at the result. \end{proof}

If let $\lambda_\vc := Z(\vc)^{\frac{1}{s}}= \big[\E_{p(\vx|\vc)} [\frac{p(\vx|\vc)}{p(\vx)}]^s\big]^{\frac{1}{s}}$, we could guarantee the convergence of $p^\text{sample}_\theta$ to $p^\text{sample}$. However, in practice estimating $Z(\vc)$ could be intricately difficult, so we formalize $\lambda_\vc$ as a training parameter, resulting in our practical algorithm in Eq. \ref{eq:practical_loss}.
\section{Additional Experimental Results}
\label{sec:beta}
We provide more image samples to compare CCA and CFG in Figure \ref{fig:picllamagen} and Figure \ref{fig:picvar}.

We illustrate the effect of training parameter $\beta$ on the FID-IS trade-off in Figure \ref{fig:beta}. Overall, $\beta$ affects the fidelity-diversity trade-off similar to CCA $\lambda$ and the CFG method. 
\begin{figure}[h]
    \centering
    \includegraphics[width=0.39\linewidth]{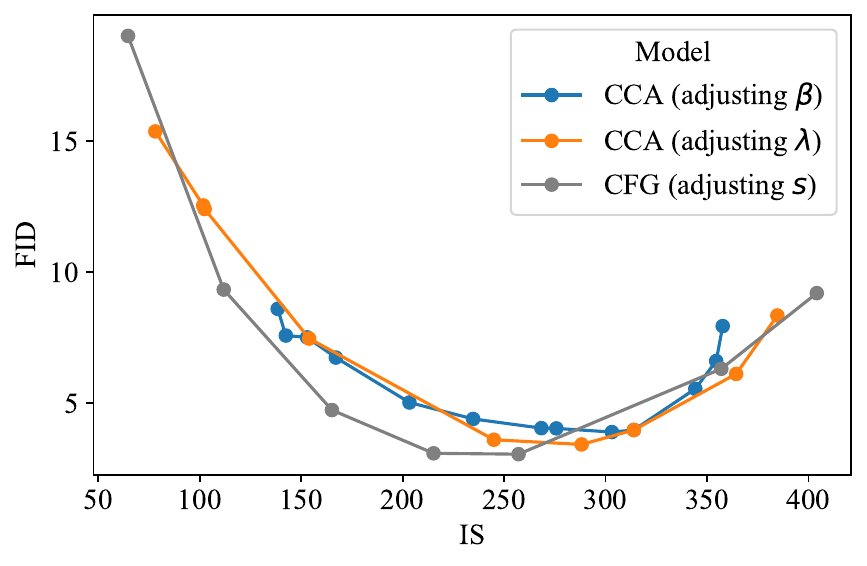}
    \caption{Effect of varying $\beta$ of CCA for the LlamaGen-L model. In our CCA experiments, we either fix $\lambda=1e3$ and ablate $\beta \in [2, 5e-3]$ (from left to right) or fix $\beta=0.02$ and ablate $\lambda \in [0, 1e4]$. In our CFG experiments, we ablate $s \in [0,3]$.}
    \label{fig:beta}
\end{figure}
\newpage
\section{Training Hyperparameters}
\label{sec:details}
Table \ref{tab:Hyperparameter} reports hyperparameters for chosen models in Figure \ref{fig:result_allsizes} and Figure \ref{fig:CCA+CFG}. Other unmentioned design choices and hyperparameters are consistent with the default setting for LlamaGen \url{https://github.com/FoundationVision/LlamaGen} and VAR \url{https://github.com/FoundationVision/VAR} repo. All models are fine-tuned for 1 epoch on the ImageNet dataset. We use a mix of NVIDIA-H100, NVIDIA A100, and NVIDIA A40 GPU cards for training.
\begin{table}[h]
\centering
\begin{tabular}{l c c c c c c c c c}
\toprule
Type & \multicolumn{5}{c}{LlamaGen} & \multicolumn{4}{c}{VAR} \\
\cmidrule(lr){2-6} \cmidrule(lr){7-10}
Model & B & L & XL & XXL & 3B & d16 & d20 & d24 & d30 \\
Size & 111M & 343M & 775M & 1.4B & 3.1B & 310M & 600M & 1.0B & 2.0B \\
\midrule
CCA $\beta$ & 0.02 & 0.02 & 0.02 & 0.02 & 0.02 & 0.02 & 0.02 & 0.02 & 0.02 \\
CCA $\lambda$ & 1000 & 300 & 1000 & 1000 & 500 & 50 & 50 & 100 & 1000 \\
CCA+CFG $\beta$ & 0.1 & 0.02 & 0.1 & 0.1 & 0.1 &- & - & - & - \\
CCA+CFG $\lambda$ & 1 & 1 & 1 & 1 & 1 & - & - & - & - \\
Learning rate & 1e-5 & 1e-5 & 1e-5 & 1e-5 & 1e-5 & 2e-5 & 2e-5 & 2e-5 & 2e-5 \\
Dropout? & Yes & Yes & Yes & Yes & Yes & None & Yes & Yes & Yes \\
Batch size & 256 & 256 & 256 & 256 & 256 & 256 & 256 & 256 & 256 \\
\bottomrule
\end{tabular}
\caption{\label{tab:Hyperparameter} Hyperparameter table.}
\end{table}

\end{document}